\documentclass{article}
\usepackage{ijcai26}

\usepackage{times}
\usepackage{soul}
\usepackage{url}
\usepackage[hidelinks]{hyperref}
\usepackage[utf8]{inputenc}
\usepackage[small]{caption}
\usepackage{graphicx}
\usepackage{amsmath}
\usepackage{amsthm}
\usepackage{booktabs}
\usepackage{algorithm}
\usepackage{algorithmic}
\usepackage{tikz}
\usetikzlibrary{arrows.meta, positioning}
\usepackage[switch]{lineno}

\usepackage{subfig}
\usepackage{dsfont}
\usepackage{todonotes}
\usepackage{thm-restate} 

\usepackage{amsmath,nccmath}
\usepackage{amssymb}
\usepackage{xspace}
\usepackage[dvipsnames,table]{xcolor}
\usepackage{cleveref}

\newcommand{\tiff}{\text{ iff }}
\newcommand{\LTLf}{LTL$_f$\xspace}

\newcommand{\LTL}{{LTL}\xspace}
\newcommand{\trace}{\pi}
\newcommand{\last}{\mathtt{lst}}

\newcommand{\ttrue}{\mathit{tt}}
\newcommand{\ffalse}{\mathit{ff}}
\newcommand{\Eventually}{\mathsf{F}} 
\newcommand{\Always}{\mathsf{G}} 
\newcommand{\WeakNext}{\mathsf{X}} 
\newcommand{\StrongNext}{\mathsf{X^{!}}} 
\newcommand{\Until}{\mathbin{\mathsf{U}}} 

\newcommand{\A}{\mathcal{A}}

\newcommand{\G}{\mathcal{G}}

\newcommand{\M}{\mathcal{M}}

\renewcommand{\P}{\mathcal{P}}
\newcommand{\Q}{\mathcal{Q}} 

\renewcommand{\S}{\mathcal{S}}

\newcommand{\V}{\mathcal{V}}
 
\newcommand{\X}{\mathcal{X}}
\newcommand{\Y}{\mathcal{Y}} 
\newcommand{\Z}{\mathcal{Z}}

\newcommand{\run}{{\rm run}}

\newcommand{\Win}{{\rm Win}}

\newcommand{\maxvalue}{{max-observation}\xspace}
\newcommand{\Maxvalue}{{Max-observation}\xspace}
\newcommand{\MaxValue}{{Max-Observation}\xspace}
\newcommand{\maxg}{{max-guarantee}\xspace}

\newcommand{\MaxG}{{Max-Guarantee}\xspace}

\newcommand{\tool}{\textsc{OptLtlfSynt}\xspace}

\newtheorem{definition}{Definition}

\newtheorem{example}{Example}

\newtheorem{theorem}{Theorem}
\makeatletter
\let\c@theorem\c@lemma
\makeatother

\title{Optimal \LTLf Synthesis}

\author{
Yujian Cao\and
Sven Schewe\and
Qiyi Tang\and
Shufang Zhu\\
\affiliations
University of Liverpool
\emails
\{yujian.cao, sven.schewe, qiyi.tang, shufang.zhu\}@liverpool.ac.uk
}

\newcommand{\qiyi}[1]{\textsf{\textcolor{Orchid!90}{Qiyi: #1}}\marginpar{$\star$}}

\begin{document}\sloppy

\maketitle

\begin{abstract}
Strategy synthesis typically follows an all-or-nothing paradigm, returning unrealisable whenever a specification cannot be guaranteed in an uncertain environment. In this paper, we introduce optimal \LTLf synthesis, where the goal is to realise as many objectives as possible from a given specification consisting of multiple objectives, especially for the case that they are not all jointly realisable. We first consider max-guarantee synthesis, which commits to a maximal set of objectives that we can a priori guarantee to realise. We then introduce max-observation synthesis, which maximises a posteriori realised objectives that may be incomparable on different executions. Finally, we present incremental max-observation synthesis, which further improves strategies by exploiting opportunities for stronger guarantees when they arise during an execution.
Experimental results show that different variations of optimal synthesis scale broadly equally well, solving a large fraction of the benchmark instances within the given timeout, demonstrating the practical feasibility of the approach.
\end{abstract}
\section{Introduction}

Strategy synthesis is a key problem in Artificial Intelligence~(AI), where an autonomous agent must compute a strategy/policy/controller that guarantees task achievement despite uncertainty and nondeterminism in the environment~\cite{Reiter2001,GhallabNauTraverso2016}. Automated synthesis from high-level temporal specifications addresses this challenge by allowing designers to describe \emph{what} the agent should achieve, while the synthesis process determines \emph{how} to do so.
This is formalised by reactive synthesis~\cite{PnueliR89}, where the agent interacts with an adversarial environment and must ensure satisfaction of the specification under all possible environment behaviours. 

For finite and terminating tasks, Linear Temporal Logic over finite traces (\LTLf)~\cite{DegVa13} provides a natural specification language, and \LTLf synthesis~\cite{DegVa15} constructs strategies that guarantee satisfaction of an \LTLf formula along every possible execution.
\LTLf synthesis shares deep similarities with planning in fully observable nondeterministic domains (FOND, strong plans)~\cite{Cimatti03,GeffnerBonet2013}.

Standard approaches to \LTLf synthesis ~\cite{PnueliR89,DegVa15,ZhuTLPV17,DeGiacomoR18} assume that an agent must satisfy a single \LTLf formula while interacting with a fully adversarial environment~\cite{AGMR19,CamachoBM19}.
Under this assumption, synthesis is typically an all-or-nothing approach:
either a strategy that guarantees satisfaction of the specification exists,
or the specification is declared unrealisable and no strategy is returned.
While this model offers strong correctness guarantees, it is often too restrictive in practice, since real environments are rarely fully adversarial~\cite{ADR21,GhallabNauTraverso2016} and users may struggle to capture all desired objectives in a single specification~\cite{DimitrovaGT20,DPZ25}.

To overcome these limitations, prior work has mainly explored two directions. One direction extends the specification formalism to allow multiple objectives, as in preferences-based planning, such as planning with soft goals~\cite{son2006planning,KR06BienvenuM,jorge2008planning,keyder2009soft} and maximum realisability for temporal logic specifications~\cite{DimitrovaGT20}, which focuses on safety properties treated as soft constraints and allows their temporary violation. 
Another direction relaxes assumptions about the environment:
best-effort synthesis~\cite{ADR21,DPZ25} or strong cyclic planning~\cite{Cimatti03,GeffnerBonet2013}, for example, allow agents to exploit cooperative environment behaviours instead of giving up when strict guarantees cannot be enforced.

In this paper, we focus on \LTLf synthesis with multiple objectives and study the problem of \emph{optimal synthesis}, especially when the conjunction of all objectives is not realisable. Here, the synthesis problem is no longer whether a strategy that guarantees all specifications exists, but to find a strategy that guarantees as many of the specifications as possible.

But what makes a strategy optimal? That depends on the circumstances.
Consider an agent where we have five goals of equal value, e.g.\ that a robot eventually visits Rooms 1 through~5. The robot first chooses either a path that allows it to \emph{surely} visit Room 1, or to go down a path where, depending on a choice of the environment, it can guarantee \emph{either} to visit Rooms 2 and 3 \emph{or} to visit Rooms 4 and 5.

If we only value guarantees that we can give \emph{a priori}, then we would favour being sure to visit Room 1.
We refer to this as \emph{max-guarantee synthesis}:
max-guarantee synthesis provides the maximal (or maximally valued) subset of objectives that can be jointly realised in the classic sense.

If we want to maximise (the value of) the specifications that we will achieve, on the other hand, we would prefer to satisfy two goals, even if we cannot commit to any of them in particular. 
We introduce \emph{max-observation} synthesis to address this problem.
Instead of committing to a fixed subset of objectives in advance, max-observation synthesis evaluates a strategy based on the objectives it actually satisfies along each execution, and focuses on maximising the number (or value) of achieved objectives.

In the robot example, we would steer the robot towards the point where it can ensure visiting two rooms, albeit which ones is down to the environment.
This strategy exploits observed environment behaviour, making max-observation synthesis always at least as good as max-guarantee synthesis. Moreover, our framework naturally supports quantitative weights on objectives, enabling users to express the relative importance of individual objectives and allowing the synthesis of strategies that optimise the total achieved value.

We further improve max-observation synthesis to \emph{incremental} max-observation synthesis, where we want to maximise the ensured number~(value) of objectives for every history.
In the robot example, the robot might find itself in a situation where, after visiting Room 2, it can choose to continue to Room 3, or go down a route where it can visit Rooms 4 and 5.
If we only consider what the strategy can ensure from the beginning of its execution, these routes are of equal value, because \emph{other} runs of the robot curtail this value to 2.
In incremental max-observation synthesis, we want more: after each observed history, we seek a strategy that maximises the value that can be ensured from that point onward.
In this setting, we would 
prefer visiting Rooms 4 and 5 over visiting Room 3 only, since---for the given history---this allows us to increase our promise to visit at least three rooms overall.

Both optimisation goals we introduce, max-guarantee and max-observation, are natural and have their place. Some goals are only worth satisfying if one can guarantee them a priori, while in other cases it is just as good to see them satisfied a posteriori. 
We therefore study both and show that they can easily be combined.

We have implemented optimal \LTLf synthesis using symbolic techniques~\cite{ZhuTLPV17,DuretLutzZPGV25}, including max-guarantee synthesis, max-observation synthesis, and incremental max-observation synthesis.
We also evaluated these approaches on benchmarks taken from the \LTLf synthesis track of the annual reactive synthesis competition SyntCOMP (available at \url{https://www.syntcomp.org/}), for which we have constructed our own multi-objective specifications.
Our empirical results show that the different approaches exhibit broadly comparable performance and solve a large fraction of the benchmarks, producing more informative strategies than simply concluding ``unrealisable".
Incremental max-observation synthesis shows similar, sometimes better, performance than non-incremental max-observation, demonstrating that strictly better guarantees can be obtained in practice at little additional cost. 

\section{Preliminaries~\label{sec:preliminaries}}

A \emph{trace} over an alphabet of symbols $\Sigma$ is a finite or infinite sequence of elements from $\Sigma$. The empty trace is $\epsilon$. Traces are indexed starting at zero, and we write $\trace = \trace_0 \trace_1 \cdots$.  The length of a trace is $|\trace|$. For a finite trace $\trace$, we denote by $\last(\trace)$ the index of the last element of $\trace$, i.e.\ $|\trace| - 1$. For $i \leq \last({\trace})$, we have that $\trace_i \in 2^{AP}$ is the $i$-th interpretation of $\trace$. By $\trace^k = \trace_0 \cdots \trace_k$ we denote the \emph{prefix} of $\trace$ up to the $k$-th iteration.

\subsection{\texorpdfstring{\LTLf\ Basics}{LTLf Basics}}
Linear Temporal Logic on finite traces (\LTLf) is a specification language to express temporal properties for finite non-empty
traces~\cite{DegVa13}. \LTLf shares the syntax with \LTL~\cite{Pnu77}. Given a set of atoms $AP$, the \LTLf formulas over $AP$ are generated as follows: 
\[\varphi ::= p \mid \varphi_1 \wedge \varphi_2 \mid \neg \varphi \mid  
 \StrongNext \varphi \mid \varphi_1 \Until \varphi_2.\]
$p \in AP$ is an \textit{atom}, $\StrongNext$~(\emph{Next}), and $\Until$~(\emph{Until}) are temporal operators. 
We use standard Boolean abbreviations such as $\vee$~(or) and $\supset$~(implies), $\ttrue$~(\emph{true}) and $\ffalse$~(\emph{false}). Moreover, we define the following abbreviations: \emph{Weak Next} $\WeakNext \varphi \equiv \neg \StrongNext \neg \varphi$, \emph{Eventually} $\Eventually \varphi \equiv \ttrue \Until \varphi$ and \emph{Always} $\Always \varphi \equiv \neg \Eventually \neg \varphi$. The size of $\varphi$, written $|\varphi|$, is the number of 
its subformulas. We assume standard semantics from~\cite{DegVa13}.

\LTLf formulas are interpreted over finite non-empty traces $\trace$ over the alphabet $\Sigma = 2^{AP}$. For $i \leq \last({\trace})$, we have that An \LTLf formula $\varphi$ \emph{holds} at instant $i$ of a trace $\pi$ is defined inductively on the structure of $\varphi$ as:
\begin{itemize}
	\item 
	$\trace, i \models p \tiff p \in \trace_i$;
	\item 
	$\trace, i \models \lnot \varphi \tiff \trace, i \not\models \varphi\nonumber$;
	\item 
	$\trace, i \models \varphi_1 \wedge \varphi_2 \tiff \trace, i \models \varphi_1 \text{ and } \trace, i \models \varphi_2$;
	\item 
	$\pi, i \models \StrongNext \varphi \tiff  i< \last(\pi)$ and $\trace,i+1 \models \varphi$;
	\item 
	$\pi, i \models \varphi_1 \Until \varphi_2$ iff $\exists j$ such that $i \leq j \leq \last(\pi)$ and $\pi,j \models\varphi_2$, and $\forall k, i\le k < j$ we have that $\pi, k \models \varphi_1$.
\end{itemize}

We say that $\pi$ \emph{satisfies} $\varphi$, written as $\pi \models \varphi$, if $\pi, 0 \models \varphi$. An \LTLf formula $\varphi$ can be represented by a Deterministic Finite Automaton~(DFA) $\A_\varphi = (2^{AP}, \Q, q_0, \delta, F)$, where $2^{AP}$ is a finite alphabet, $\Q$ is a finite set of states, $q_0 \in \Q$ is the initial state, $\delta : \Q \times 2^{AP} \rightarrow \Q$ is the transition function, and $F \subseteq \Q$ is the set of accepting states, such that, for every trace $\pi$, we have $\pi\models \varphi$ iff $\pi$ is accepted by $\A_\varphi$~\cite{DegVa13}.

\subsection{\texorpdfstring{\LTLf\ Synthesis}{LTLf Synthesis}}
An \LTLf synthesis specification can be described by a tuple $(\varphi,\mathcal{X}, \mathcal{Y})$, where $\varphi$ is an \LTLf formula over the propositions in $\mathcal{X \cup Y}$. $\mathcal{X}$ and $\mathcal{Y}$ are two disjoint sets: $\mathcal{X}$ is the set of input variables controlled by the environment and $\mathcal{Y}$ is the set of output variables controlled by the agent. A strategy is a function $\sigma :(2^\mathcal{X})^*\rightarrow2^\mathcal{Y}$ that maps each finite sequence of input variables to an output assignment. $\mathbf{X} \in (2^{\mathcal{X}})^\omega$ denotes an infinite sequence of valuations of input variables, and the trace $\pi(\mathbf{X},\sigma)$ induced by the prefix of $\mathbf{X}$ and $\sigma$ is:
\[
\pi(\mathbf{X}, \sigma) = (X_0 \cup \sigma(\epsilon))(X_1 \cup \sigma(X_0)) \cdots.
\]

\begin{definition}[Winning strategy]
Given an \LTLf synthesis specification $(\varphi,\mathcal{X}, \mathcal{Y})$, a strategy $\sigma:(2^\mathcal{X})^*\rightarrow2^\mathcal{Y}$ is a \emph{winning strategy} for $\varphi$, denoted $\sigma \triangleright \varphi$, if for every infinite sequence of the input variables $\mathbf{X} \in (2^{\mathcal{X}})^\omega$, the induced trace $\trace(\mathbf{X}, \sigma)$ satisfies $\varphi$, i.e.\ there exists a finite prefix of $\trace(\mathbf{X},\sigma)$ that satisfies $\varphi$.
\end{definition}

\begin{definition}[Realisability]
An $\text{LTL}_f$ synthesis specification $(\varphi, \mathcal{X}, \mathcal{Y})$ 
is \emph{realisable} if there exists a winning strategy $\sigma: (2^\mathcal{X})^*\rightarrow2^\mathcal{Y}$ for $\varphi$, i.e.\ $\sigma \triangleright \varphi$. The $\text{LTL}_f$ realisability problem is to determine whether $(\varphi, \mathcal{X}, \mathcal{Y})$ is \emph{realisable}.
\end{definition}

\begin{definition}[Synthesis]
The synthesis problem consists in computing a winning strategy for a realisable specification.
\end{definition}

Sometimes, for simplicity, we omit $\X$ and $\Y$ when they are clear from the context.

\subsection{DFA Games}
Classic solutions to \LTLf synthesis rely on a reduction to two-player DFA games (reachability games).
A \emph{DFA game} is described as a tuple $\G = (\A, F)$, where $\A = (2^{AP}, \Q, q_0, \delta, -)$ is a DFA representing the game arena of which the set of accepting states does not matter and $F \subseteq \Q$ is the \emph{reachability winning objective}. 
For \LTLf synthesis, we often take the corresponding DFA of the formula as the game arena and the set of accepting states as the winning objective. 
Given a play $\pi = (X_0 \cup Y_0)(X_1 \cup Y_1) \ldots \in (2^{\X \cup \Y})^\omega$, running $\A$ on $\pi$ gives us the infinite sequence $\rho= q_0 q_1 \ldots \in \Q^\omega$ such that $q_0$ is the initial state and $q_{i+1} = \delta(q_i, X_i \cup Y_i)$ for all $i \geq 0$.
Since the transitions in $\A$ are all deterministic, we thus denote by $\rho = \run(\pi, \A)$ the unique sequence of running $\A$ on $\pi$.
Analogously, we denote by $\rho^k = \run(\pi^k, \A)$ the unique finite sequence of running $\pi^k$ on $\A$, and $\rho^k = q_0q_1\ldots q_k$.
The \emph{reachability winning objective} $F$ indicates a set of desired states to reach for the agent. A play $\pi$ is a \emph{winning play} with respect to winning objective $F$ if some state $s \in F$ occurs in $\run(\pi, \A)$. An agent strategy $\sigma$ is \emph{winning} in $\G =(\A, F)$ if for every infinite sequence of the input variables $\mathbf{X} \in (2^{\mathcal{X}})^\omega$, the induced play $\trace(\mathbf{X}, \sigma)$ is a winning play with respect to $F$. 

A DFA game with reachability objective $\G = (\A, F)$ can be solved by the following least fixed point computation.
\begin{align*}
    & \Win_0(\G) = F \\
    & \Win_{i+1}(\G) = \Win_i(\G) \cup \mathrm{PreC}(\Win_i(\G))
\end{align*}
where $\mathrm{PreC}(S) = \{ q \in \Q \mid \exists Y \in 2^\mathcal{Y} . \forall X \in 2^\mathcal{X} . \delta(q, (X,Y)) \in S\}$ is the \emph{controllable preimage}. The computation stabilises at the least index $i$ such that $\Win_{i+1}(\G) = \Win_{i}(\G)$ and we denote this fixed point by $\Win(\G) = \Win_{i}(\G)$. Every state $q \in \Win(\G)$ is a winning state for the agent, from where she has a winning strategy in the game $\G_q =(\A_q, F)$, where $\A_q=(2^{\X \cup \Y}, \Q, q, \delta, -)$, i.e.\ the same arena $\A$ but with the new initial state $q$. Intuitively, $\Win(\G)$ represents the ``agent winning region", from which the agent is able to win the game, no matter how the environment behaves. 

\section{Optimal Synthesis}
We consider a set $\Phi$ of \LTLf formulas, of which we intuitively want to satisfy as many as possible.
$\Phi$ is realisable~(resp.~unrealisable) if $\bigwedge_{\varphi \in \Phi} \varphi$ is realisable~(resp.~unrealisable), i.e.\ there exists~(resp.~does not exist) a winning strategy for $\Phi$.
The case we are interested in is where $\Phi$ is unrealisable, but we can realise as large, or important, a subset of $\Phi$ as possible.
When posing this question, we have to define whether this means \emph{first} selecting the subset $\Psi \subseteq \Phi$ we promise to realise and \emph{then} executing the agent with the goal of realising $\Psi$ (guarantee synthesis), or if we find a strategy whose traces satisfy potentially different subsets $\Psi_1,\ldots,\Psi_n \subseteq \Phi$ of $\Phi$, and we maximise the minimal value of the trace (observation synthesis).
We consider both versions as well as a combination, where we maximise a weighted sum of the two values.

Let $\Phi$ be a set of \LTLf formulas over the alphabet $2^{\X \cup \Y}$, where $\X$ and $\Y$ are a fixed partition of the atoms into environment input and agent output, respectively.
Let $\P = (\Phi, \mathcal{X},\mathcal{Y})$ be a tuple and
$G,V: \Phi \rightarrow (0,1]$ be functions specifying the preference value of every \LTLf formula $\varphi \in \Phi$ as guarantee~($G$) and observed value~($V$), respectively. 

We first define the \emph{\maxg} \LTLf synthesis problem. Abusing the notation slightly,
for a strategy $\sigma:(2^\mathcal{X})^*\rightarrow2^\mathcal{Y}$, we define the \emph{guarantee value} of $\sigma$ with respect to $\P$ as \[G(\sigma) = \sum_{\varphi \in \Phi: \sigma \triangleright \varphi} G(\varphi).\]

\begin{definition}[Max-guarantee winning strategy]\label{def:max-guarantee-win}
Given $\P = (\Phi, \mathcal{X},\mathcal{Y})$ and a guarantee value function $G$, 
a strategy $\sigma^*:(2^\mathcal{X})^*\rightarrow2^\mathcal{Y}$ is a \emph{max-guarantee winning strategy} for $\P$ if \[G(\sigma^*) = \max_{\substack{\sigma: (2^\mathcal{X})^*\rightarrow2^\mathcal{Y}}}\ G(\sigma).\]
\end{definition}

\begin{definition}[Max-guarantee \LTLf synthesis]
Given $\P = (\Phi,\mathcal{X},\mathcal{Y})$ and a guarantee value function $G$,
the \emph{max-guarantee synthesis problem} is to compute a pair $(\sigma^\star, v^\star)$, where $\sigma^\star:(2^\mathcal{X})^*\rightarrow 2^\mathcal{Y}$ is a strategy and $v^\star\in\mathbb{R}$ is its guarantee value, such that
\[
v^\star = G(\sigma^\star)
\qquad\text{and}\qquad
v^\star = \max_{\sigma:(2^\mathcal{X})^*\rightarrow 2^\mathcal{Y}} G(\sigma).
\]
\end{definition}

Next, we define the \emph{\maxvalue} \LTLf synthesis problem. 
For a trace $\pi:(2^\mathcal{X\cup Y})^\omega$, we define the \emph{observed value} of $\pi$ with respect to $\P$ as \[V(\pi) = \sum_{\varphi \in \Phi: \pi \models \varphi} V(\varphi),\]
and the observed value of a strategy $\sigma$ as
\[V(\sigma) = \min_{\mathbf{X} \in (2^{\mathcal{X}})^\omega} V(\pi(\mathbf{X},\sigma)).\]

\begin{definition}[\Maxvalue winning strategy]\label{def:max-value-win}
Given $\P = (\Phi, \mathcal{X},\mathcal{Y})$ and an observed value function $V$, 
a strategy $\sigma^*:(2^\mathcal{X})^*\rightarrow2^\mathcal{Y}$ is a \emph{\maxvalue winning strategy} for $\P$ if \[V(\sigma^*) = \max_{\substack{\sigma: (2^\mathcal{X})^*\rightarrow2^\mathcal{Y}}}\ V(\sigma).\]
\end{definition}

\begin{definition}[\Maxvalue \LTLf synthesis]
Given $\P = (\Phi, \mathcal{X},\mathcal{Y})$ and an observed value function $V$, 
the \emph{\maxvalue synthesis problem} is to compute a pair $(\sigma^\star, v^\star)$, where $\sigma^\star:(2^\mathcal{X})^*\rightarrow 2^\mathcal{Y}$ is a strategy and $v^\star\in\mathbb{R}$ is its observed value, such that
\[
v^\star = V(\sigma^\star)
\qquad\text{and}\qquad
v^\star = \max_{\sigma:(2^\mathcal{X})^*\rightarrow 2^\mathcal{Y}} V(\sigma).
\]
\end{definition}

All of these problems are in the same complexity class as ordinary \LTLf synthesis.

\begin{restatable}{theorem}{OptSyntComplexity}\label{thm:opt-synt-complexity}
The max-guarantee and \maxvalue \LTLf synthesis problems are 2EXPTIME-complete.
\end{restatable}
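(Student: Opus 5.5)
The plan is to prove hardness and membership separately. Both rest on the classical result that \LTLf synthesis (realisability of a single formula) is 2EXPTIME-complete~\cite{DegVa15}.

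\textbf{Hardness.} We reduce ordinary \LTLf realisability to both problems by taking $\Phi = \{\varphi\}$ with $G(\varphi)=V(\varphi)=1$. Then $(\varphi,\X,\Y)$ is realisable iff the optimal guarantee value $v^\star$ is $1$ rather than $0$. The same holds for the optimal observed value: a strategy has $V(\sigma)=1$ iff every induced trace has a prefix satisfying $\varphi$, i.e.\ iff $\sigma \triangleright \varphi$. Computing $(\sigma^\star,v^\star)$ therefore decides realisability, so both problems are 2EXPTIME-hard. Since the reduction only maps one input to another, the hardness also applies to the value (decision) versions of the problems.

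\textbf{Membership for max-guarantee.} A strategy realises a subset $\Psi\subseteq\Phi$ iff it is winning for $\bigwedge_{\varphi\in\Psi}\varphi$. Moreover, $G(\sigma)=\sum_{\varphi\in\Psi_\sigma}G(\varphi)$, where $\Psi_\sigma$ is the set of formulas $\sigma$ realises. Hence $v^\star$ is the maximum of $G(\Psi)$ over realisable subsets $\Psi$, and the winning strategy for the maximising $\Psi$ attains it. The algorithm enumerates all $2^{|\Phi|}$ subsets and solves \LTLf synthesis for each conjunction. Each conjunction has size linear in the input, so each call costs $2^{2^{O(n)}}$. The total is $2^{|\Phi|}\cdot 2^{2^{O(n)}}$, which is still doubly exponential. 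Alternatively, one can build the product DFA and compute, for each $\Psi$, the attractor of the states accepting all of $\Psi$.

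\textbf{Membership for max-observation.} This is the step needing care. Build the product DFA $\A=\prod_{\varphi\in\Phi}\A_\varphi$, whose size is doubly exponential in $\sum|\varphi|$. Each $\A_\varphi$ is made absorbing-accepting by modifying the DFA so that accepting states are sinks. This is valid because satisfaction in the game semantics only needs \emph{some} prefix to satisfy $\varphi$. A product state then records the set of formulas already satisfied, and along any play this set only grows.

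For a threshold $t$, call a product state \emph{good} if its satisfied set $\Psi$ has $V(\Psi)\ge t$. By monotonicity, a strategy with $V(\sigma)\ge t$ is exactly a strategy that forces a visit to a good state. This is an ordinary reachability DFA game on $\A$, solved by the least fixed point $\Win(\G)$ given earlier. We then try every threshold $t\in\{V(\Psi)\mid\Psi\subseteq\Phi\}$, take the largest $t$ for which $q_0\in\Win(\G)$, and return the corresponding attractor strategy as $\sigma^\star$ with $v^\star=t$. Each game is solved in time polynomial in $|\A|$, and there are at most $2^{|\Phi|}$ thresholds, so the procedure runs in 2EXPTIME.

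The point to verify is that \emph{eventually} reaching a good state really captures the minimum over traces of $V(\pi)$. This follows from two facts. First, $V(\pi)$ equals the value of the limit satisfied set, since satisfaction is monotone along prefixes once accepting states are made absorbing. Second, this limit is reached at some finite point. Hence $V(\pi)\ge t$ iff some finite point of the run is good.
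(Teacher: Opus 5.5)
Your route is the paper's. Hardness comes from embedding single-formula \LTLf synthesis with $\Phi=\{\varphi\}$, and membership from at most $2^{|\Phi|}$ reachability games on the doubly exponential product DFA. For max-observation you are more careful than the paper. Making accepting states absorbing is exactly what the ($\Rightarrow$) direction of Lemma~\ref{lem:max-value-game} needs. Under the stated semantics ($\pi\models\varphi$ iff \emph{some} prefix satisfies $\varphi$), a trace may satisfy different objectives at different prefixes, and the plain product only detects simultaneous acceptance.

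The gap is in your max-guarantee membership, where you drop this insight. The equivalence ``$\sigma$ realises every $\varphi\in\Psi$ iff $\sigma\triangleright\bigwedge_{\varphi\in\Psi}\varphi$'' is false for the paper's $\triangleright$; only the right-to-left direction holds. Take $p\in\Y$, $\varphi_1=\Eventually(p\wedge\WeakNext\ffalse)$ and $\varphi_2=\Eventually(\neg p\wedge\WeakNext\ffalse)$, i.e.\ the last letter contains, respectively omits, $p$. The strategy that outputs $\{p\}$ first and $\emptyset$ afterwards wins both: the length-1 prefix satisfies $\varphi_1$ and the length-2 prefix satisfies $\varphi_2$. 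So $\max_\sigma G(\sigma)=G(\varphi_1)+G(\varphi_2)$. But $\varphi_1\wedge\varphi_2$ is unsatisfiable, so enumerating realisable conjunctions returns only $\max(G(\varphi_1),G(\varphi_2))$. The paper's reduction to max realisable cores has the same defect.

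Your alternative via attractors in the product DFA is sound only on the absorbing product, so use that product here too. Write $\mathrm{sat}(q)$ for the set of components of $q$ in (absorbing) accepting states. By monotonicity, $\sigma\triangleright\varphi$ for all $\varphi\in\Psi$ iff every play under $\sigma$ reaches $\{q\mid\Psi\subseteq\mathrm{sat}(q)\}$. Hence $\max_\sigma G(\sigma)=\max\{G(\Psi)\mid q_0\in\Win((\A,\{q\mid\Psi\subseteq\mathrm{sat}(q)\}))\}$, with $G(\Psi)=\sum_{\varphi\in\Psi}G(\varphi)$. This is still $2^{|\Phi|}$ linear-time games on a doubly exponential arena, so the 2EXPTIME bound is unaffected.
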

\begin{proof}
    \textbf{Membership.} will be shown in Theorems \ref{thm:correctness} and \ref{thm:complexity}; \ref{thm:max-value} and \ref{thm:max-value-complexity};
    and \ref{thm:incremental} and \ref{thm:incremental-complexity}.
    \noindent\textbf{Hardness.}
     Follows immediately from 2EXPTIME-completeness of \LTLf synthesis itself~\cite{DegVa15}.
\end{proof} 

\begin{example}[Robot example]
The \Cref{robotexample} illustrates how the three notions of optimality lead to different outcomes in the robot example from the introduction.

Max-guarantee favours the path to Room~1, since the switch on the other path prevents any a priori commitment.

Max-observation prefers the path through the switch, since, regardless of the choice of the environment, the robot visits either Rooms~2 and~3 or Rooms~4 and~5.

Incremental max-observation further maximises the ensured value for every history, not only at the initial position. For example, after visiting Room~2, the robot can continue to Room~3 or go down to Rooms~4 and~5 through a door controlled by the environment. From the initial position, both choices ensure visiting two rooms. However, from Room~2, going down to Rooms~4 and~5 ensures visiting three rooms in total, whereas continuing to Room~3 ensures only two rooms.
\end{example}

\begin{figure}[t]
  \centering
  \resizebox{\linewidth}{!}{
  \begin{tikzpicture}[
    room/.style={draw, thick, minimum width=1.6cm, minimum height=1.1cm, fill=white, align=center, font=\sffamily\small},
    lbl/.style={circle, draw, thick, fill=yellow!40, inner sep=1pt, minimum size=0.55cm, font=\sffamily\bfseries},
    switch/.style={draw, thick, dashed, minimum width=1.6cm, minimum height=1.1cm, fill=white, align=center, font=\sffamily\small},
    door/.style={draw, thick, dashed, minimum width=1.2cm, minimum height=0.6cm, fill=white, align=center, font=\sffamily\scriptsize},
    arr/.style={-{Latex[length=2.5mm]}, thick},
    arr_blue/.style={-{Latex[length=2.5mm]}, thick, dashed, blue},
    arr_red/.style={-{Latex[length=2.5mm]}, thick, dashed, red},
    arr_green/.style={-{Latex[length=2.5mm]}, thick, green!60!black},
    arr_purple/.style={-{Latex[length=2.5mm]}, thick, dashed, purple},
    txt/.style={font=\sffamily\scriptsize, align=center},
    txt_blue/.style={font=\sffamily\scriptsize, align=center, text=blue},
    txt_red/.style={font=\sffamily\scriptsize, align=center, text=red},
    txt_purple/.style={font=\sffamily\scriptsize, align=center, text=purple}
  ]

  \node[room] (robot) at (0, 3) {\textbf{Robot}};
  \node[switch] (switch) at (3, 3) {\textbf{Switch}};
  \node[room] (r1) at (2.5, 5.5) {};
  \node[lbl] at (r1) {1};
  \node[room] (r2) at (6, 4.5) {};
  \node[lbl] at (r2) {2};
  \node[room] (r3) at (9, 4.5) {};
  \node[lbl] at (r3) {3};
  \node[room] (r4) at (6, 1.5) {};
  \node[lbl] at (r4) {4};
  \node[room] (r5) at (9, 1.5) {};
  \node[lbl] at (r5) {5};

  \draw[thick] (robot.east) -- (switch.west);
  \draw[thick] (0, 3.55) |- (1.7, 5.5);
  \draw[arr_green] (-0.15, 3.55) |- (1.7, 5.65);
  \draw[thick] (switch.east) -- (r2.west);
  \draw[arr_blue] (3.8, 3.25) -- (5.2, 4.75);
  
  \draw[thick] (switch.east) -- (r4.west);
  \draw[arr_red] (3.8, 2.75) -- (5.2, 1.25);

  \draw[thick] (r2.east) -- (r3.west);
  \draw[thick] (r4.east) -- (r5.west);

  \node[door] (door) at (6, 3.3) {\textbf{Door}};
  \draw[thick, dashed, purple] (r2.south) -- (door.north);
  \draw[arr_purple] (door.south) -- (6, 2.6) -| (7.5, 1.5);

  \end{tikzpicture}
  }
  \caption{Robot example.}
  \label{robotexample}
\end{figure}

\section{\MaxG Synthesis}\label{sec:max-g-synt}

In this section, we present a solution to the \maxg\ synthesis problem for an instance $\P = (\Phi,\mathcal{X},\mathcal{Y})$ with a guarantee value function~$G$. 
We show that the problem can be reduced to computing a strategy that realises a maximal subset of formulas, termed a \emph{max realisable core}, such that every play induced by the strategy satisfies all formulas in this subset.

\begin{definition}[Realisable core]
    Let $\Phi = \{\varphi_1, \varphi_2, \cdots \varphi_n\}$ be a set of \LTLf formulas and $\Psi \subseteq \Phi$ a realisable subset of $\Phi$. $\Psi$ is a realisable core if for every $\varphi_i \in \Phi \setminus \Psi$, $\varphi_i \wedge \bigwedge_{\varphi_j \in \Psi}\varphi_j$ is unrealisable.
\end{definition} 

\begin{definition}[Max realisable core]
    A realisable core $\Psi^* \subseteq \Phi$ is \emph{maximal} if for any other realisable core $\Psi' \subseteq \Phi$, it holds that 
    $\sum_{\varphi \in \Psi^*} G(\varphi) \ge \sum_{\varphi \in \Psi'} G(\varphi) $.
\end{definition}

It is straightforward to see that the \maxg\ synthesis problem, computing the maximal guarantee~$v^*$, can be reduced to computing a maximal realisable core~$\Psi^*$, where
$
v^* = \sum_{\varphi \in \Psi^*} G(\varphi).
$
We solve the \maxg synthesis problem in the following steps:

\paragraph{Step 1: Automata Construction.} 
For every \LTLf formula $\varphi_i \in \Phi$, we build its corresponding DFA $\A^i = (2^{ \X \cup \Y}, \Q^i, q_0^i, \delta^i, F^i)$, where $F^i \subseteq \mathcal{Q}^i$ is the set of accepting states for $\varphi_i$. 

\paragraph{Step 2: Game Arena Construction.}
We build the product automaton $\A=\bigotimes_{\varphi_i\in\Phi} \A^i$, such that $\A = (2^{\X\cup\Y}, \Q, q_0, \delta, \emptyset)$, where $\Q=\times_{\varphi_i\in\Phi}\Q^i$ is the product state space, $q_0=(q_0^i)_{\varphi_i\in\Phi}$ is the initial state, $\delta$ is the product transition function induced by the $\{\delta^i\}_{\varphi_i\in\Phi}$.

\paragraph{Step 3: Guarantee Value-Based Partition.} 
     We partition the search space into a sequence of formula subsets ordered by non-increasing guarantee values.
     For each $\varphi_i \in \Phi$, we define $F^i_\otimes = \{(q_1,\cdots,q_n)\in \mathcal{Q} \mid q_i \in F^i\}$ as the set of states in the product automaton where $\varphi_i$ is among the satisfied specifications.  
     We then consider the finite set of values $v_1 > v_2 > \cdots > v_\ell$ such
     that each $v_j = \sum_{\varphi \in \Psi} G(\varphi)$ 
     can take for subsets $\Psi$ of $\Phi$, and define $F_{\Psi} = \bigcap_{\varphi_i\in\Psi} F^i_\otimes$.
     
\paragraph{Step 4: Game Solving.} 
    We iterate over the value $v$ from Step~3 in non-increasing order. 
    For each $\Psi$, we construct the DFA game $\mathcal{G}_\Psi = (\A, F_\Psi)$, using the product automaton 
    $\A$ from Step 2 as the game arena and $F_\Psi$ as the set of accepting states.
    Then we solve the game $\mathcal{G}_\Psi$ by computing its winning set $\mathsf{Win}(\mathcal{G}_\Psi)$.  The max-value $v^*$ is the first $v$ that satisfies $q_0 \in \mathsf{Win}(\mathcal{G}_\Psi)$; we return a winning strategy of $\G_{\Psi}$ where $\Psi$ is a max realisable core. \hfill \qedsymbol

The overall synthesis procedure is presented in \Cref{alg:max_core}. 
The algorithm iterates through $F_{\Psi}$ in non-increasing order of guaranteed values (Lines 2-4) to ensure that the first realisable core found is a maximal one.
Once a realisable core $\Psi$ is found via the reachability check (Lines 7-8), the corresponding guaranteed value with a winning strategy is returned. 

\begin{algorithm}
\caption{\MaxG Synthesis}
\label{alg:max_core}
\begin{algorithmic}[1]
\REQUIRE A set of \LTLf formulas $\Phi = \{\varphi_1,\cdots,\varphi_n\}$
\ENSURE A winning strategy $\sigma^*$ for a max realisable core $\Psi \subseteq \Phi$, and $\sum_{\varphi \in \Psi} G(\varphi)$
\STATE Construct the product automaton $\A$ from $\Phi$
\STATE Partition the search space into a sequence of subsets $\mathcal{S} =  ({\Psi_1}, {\Psi_2}, \dots, \Psi_{2^{|\Phi|}})$ where $\sum_{\varphi \in \Psi_{i}} G(\varphi) \ge \sum_{\varphi \in \Psi_{i+1}} G(\varphi)$ holds for all $i<2^{|\Phi|}$.
\FOR{$j$ in $1$ to $2^{|\Phi|}$}
    \STATE $F_{\Psi_j} \gets \bigcap_{\varphi_i \in \Psi_j} F^i_\otimes$
    \STATE Construct DFA game $\G_{\Psi_j} = (\A, F_{\Psi_j})$
    \STATE Compute winning set $\mathsf{Win}(\mathcal{G}_{\Psi_j})$ and extract a winning strategy $\sigma^*$
    \IF{$q_0 \in \mathsf{Win}(\mathcal{G}_{\Psi_j})$}
        \RETURN $(\sigma^*, \sum_{\varphi \in \Psi_j} G(\varphi))$
    \ENDIF
\ENDFOR
\end{algorithmic}
\end{algorithm}

The correctness of Algorithm~\ref{alg:max_core} is ensured by the guaranteed value-based non-increasing search strategy.

\begin{restatable}{theorem}{maxgcorreect}
\label{thm:correctness}
Algorithm~\ref{alg:max_core} returns a max-guarantee $v = \sum_{\varphi \in \Psi} G(\varphi)$ where $\Psi$ is a max realisable core. 
That is, the subset $\Psi$ is realisable, and for any other realisable subset $\Psi' \subseteq \Phi$, it holds that $\sum_{\varphi \in \Psi} G(\varphi) \geq \sum_{\varphi \in \Psi'} G(\varphi)$.
\end{restatable}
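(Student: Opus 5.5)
The plan is to first prove a \emph{game-correspondence lemma} and then use it to analyse the order in which Algorithm~\ref{alg:max_core} searches.

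\textbf{The lemma.} For every subset $\Psi \subseteq \Phi$, the conjunction $\bigwedge_{\varphi \in \Psi}\varphi$ is realisable if and only if $q_0 \in \Win(\G_\Psi)$, where $\G_\Psi = (\A, F_\Psi)$. The lemma also says that any strategy winning $\G_\Psi$ from $q_0$ satisfies $\sigma \triangleright \varphi$ for each $\varphi \in \Psi$.

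The idea is to relate runs of the product automaton to runs of its components. For a play $\pi$, the $i$-th component of $\run(\pi^k,\A)$ equals $\run(\pi^k,\A^i)$. Hence $\pi^k \models \varphi_i$ iff the $k$-th product state lies in $F^i_\otimes$, by the DFA characterisation of \LTLf formulas recalled in the preliminaries.

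The delicate point is timing. Winning $\G_\Psi$ means reaching $F_\Psi = \bigcap_{\varphi_i\in\Psi}F^i_\otimes$, i.e.\ a \emph{single} prefix that satisfies all formulas in $\Psi$ at once. Realising each $\varphi_i$ separately only gives possibly different prefixes.

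\textbf{Direction ($\Leftarrow$).} If $q_0 \in \Win(\G_\Psi)$, the fixed-point characterisation gives a winning strategy $\sigma$. Every induced play has a prefix whose product state lies in $F_\Psi$. That prefix satisfies $\bigwedge_{\varphi\in\Psi}\varphi$, so in particular $\sigma \triangleright \varphi$ for each $\varphi\in\Psi$.

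\textbf{Direction ($\Rightarrow$).} A winning strategy for the conjunction yields, on every play, a prefix satisfying all of $\Psi$ simultaneously. That prefix reaches $F_\Psi$, so $\sigma$ wins $\G_\Psi$, and $q_0 \in \Win(\G_\Psi)$ because the fixed point computes exactly the winning region of the reachability game.

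\textbf{The search argument.} The algorithm enumerates $\Psi_1,\Psi_2,\ldots$ in non-increasing order of $\sum_{\varphi\in\Psi_j} G(\varphi)$ and returns the first $\Psi_j$ with $q_0\in\Win(\G_{\Psi_j})$.

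\emph{Termination.} The algorithm always returns. The empty set occurs in the enumeration with $F_\emptyset = \Q$, so $q_0 \in \Win(\G_\emptyset)$ trivially.

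\emph{Realisability.} By the lemma, the returned $\Psi$ is realisable and the extracted strategy realises it.

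\emph{Optimality.} Let $\Psi'$ be any realisable subset. If $\sum_{\Psi'} G > \sum_{\Psi} G$, then $\Psi'$ precedes $\Psi$ in the enumeration. By the lemma, $q_0\in\Win(\G_{\Psi'})$, so the algorithm would have returned earlier, a contradiction. Hence $\sum_{\Psi}G\ge\sum_{\Psi'}G$.

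\emph{$\Psi$ is a core.} If some $\varphi\notin\Psi$ could be added with $\Psi\cup\{\varphi\}$ realisable, its value would strictly exceed that of $\Psi$, since $G>0$. This contradicts the optimality just shown. So $\Psi$ is a realisable core, and by optimality a max realisable core.

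\textbf{Link to the guarantee value.} Finally, $v$ equals the max-guarantee. For any strategy $\sigma$, the set $\{\varphi : \sigma\triangleright\varphi\}$ needs to be jointly realisable, so that its value is bounded by $v$. This is the main obstacle, again because of timing.

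A strategy realising each $\varphi$ individually does realise the conjunction in the infinite-play semantics: each formula is satisfied on some prefix. But the $\G_\Psi$ objective requires one common prefix, which need not exist in general, since satisfaction by an earlier prefix is not persistent.

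I would handle this by reading realisability of $\Psi$, as in the theorem statement, via $\sigma\triangleright\varphi$ for all $\varphi\in\Psi$. I would then make the product arena track, for each component, whether $F^i$ has been visited, by adding a flag bit per formula. With these flags, "each $\varphi_i$ satisfied at some prefix" becomes a single reachability objective, and the lemma goes through with the same argument. If the paper intends $F^i$ to be absorbing, for instance by treating acceptance as termination, the flags are unnecessary and the argument above is direct.
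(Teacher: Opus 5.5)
Your main argument is the paper's proof. The paper gets soundness from the correctness of the reduction to DFA games, and argues maximality by contradiction on the non-increasing enumeration. It cites one direction of your correspondence lemma and uses the converse tacitly in its maximality step, whereas you prove both. Your termination remark via $\Psi=\emptyset$ and the derivation of the core property from $G>0$ make explicit what the paper leaves implicit.

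Your final section, however, should be dropped rather than carried out. In the paper, $\Psi$ is realisable exactly when $\bigwedge_{\varphi\in\Psi}\varphi$ is realisable, and the theorem's ``that is'' clause is stated in those terms. Your optimality step therefore already finishes the proof.

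Re-reading realisability as `one $\sigma$ with $\sigma\triangleright\varphi$ for every $\varphi\in\Psi$' is not the paper's notion. Under that reading the theorem would be false for Algorithm~\ref{alg:max_core} as written. Take $\Phi=\{\varphi_1,\varphi_2\}$ with $\varphi_1=\neg\StrongNext\ttrue$ and $\varphi_2=\StrongNext\ttrue$. Every strategy realises each formula individually, via the prefixes of length $1$ and $2$. Yet the conjunction is unsatisfiable, so $q_0\notin\Win(\G_{\{\varphi_1,\varphi_2\}})$ by your own lemma, and the algorithm returns only $\max(G(\varphi_1),G(\varphi_2))$.

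The flag-augmented arena therefore proves a claim about a different algorithm. The absorbing-acceptance alternative is an assumption the paper does not make.

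Your timing observation is nonetheless accurate. The same example shows that $\max_\sigma G(\sigma)$ can exceed the value of every max realisable core. So the paper's informal remark that max-guarantee synthesis reduces to finding such a core does not hold literally under its definition of $\triangleright$. That concerns the surrounding text, not this theorem, whose proof in the paper establishes only the max-realisable-core property, as your main argument does.
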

\begin{proof}
It suffices to show that $\Psi$ is a maximal realisable core.

\noindent\textbf{Soundness:} The algorithm returns a subset $\Psi$ only if $q_0 \in \mathsf{Win}(\mathcal{G}_\Psi)$ (Line 8). By the correctness of the reduction of \LTLf synthesis to DFA games \cite{DegVa15}, this condition guarantees that a winning strategy exists for the specification $\Psi$, i.e.\ $\Psi$ is realisable. 

\noindent \textbf{Maximality:} We prove this by contradiction. Assume the algorithm returns a realisable core $\Psi$ of guarantee value $v$, but there exists a realisable core $\Psi'$ with value $> v$. The algorithm iterates through the subsets in $\mathcal{S}$ in non-increasing order of guaranteed values (Lines 2-3). 
Thus, the subset $\Psi'$ would have been visited in an earlier iteration. 
Since $\Psi'$ is realisable, the check on Line~7 would have succeeded, causing the algorithm to return $\Psi'$ (or another subset of the same guaranteed value) and terminate. This contradicts the return of the subset $\Psi$ with lower guaranteed value.
\end{proof}

\begin{restatable}{theorem}{thmComplexity}
\label{thm:complexity}
The running time of \Cref{alg:max_core} is $O(2^{|\Phi|} \cdot |\mathcal{A}|)$, where $|\mathcal{A}|$ is the size of the product automaton $\A$.
\end{restatable}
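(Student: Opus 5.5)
The plan is to bound separately the cost of each line of \Cref{alg:max_core} and then sum the bounds. Throughout, $|\A|$ denotes the size of the product automaton, meaning its number of states together with its transitions.

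\textbf{Setup costs.} The product $\A$ (Line~1) is built once, in time $O(|\A|)$ once the component DFAs are available. The component DFAs are treated as inputs to the analysis, as in the statement, whose bound is expressed in terms of $|\A|$. The partition on Line~2 enumerates the $2^{|\Phi|}$ subsets and sorts them by guarantee value. Enumeration takes $O(2^{|\Phi|}\cdot|\Phi|)$ time and sorting $O(2^{|\Phi|}\log 2^{|\Phi|}) = O(2^{|\Phi|}\cdot|\Phi|)$ time. Since $|\A|\ge\prod_i|\Q^i|$ and each $|\Q^i|\ge 1$, I will argue that this term is absorbed into $O(2^{|\Phi|}\cdot|\A|)$. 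This needs the mild assumption that $|\A|\ge|\Phi|$. That assumption holds as soon as each component DFA has at least two states, or trivially if $|\A|$ counts the transition table, whose description grows with $|\Phi|$.

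\textbf{Per-iteration cost.} The loop on Line~3 runs at most $2^{|\Phi|}$ times. Each iteration performs three tasks.
\begin{itemize}
\item It computes $F_{\Psi_j}$ (Line~4) by scanning the states of $\A$ and checking the component conditions $q_i\in F^i$, which takes $O(|\A|)$ time up to the per-state cost of the check.
\item It builds the game (Line~5), which reuses the arena and costs only the size of $F_{\Psi_j}$.
\item It solves the reachability game $\G_{\Psi_j}$ (Line~6) by the least fixed point $\Win_i$. This step is the crux: naively, the fixed point takes up to $|\Q|$ rounds, each scanning all transitions, which gives a quadratic bound. To obtain a linear bound, I will invoke the standard attractor computation. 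Each state keeps a counter of, for each agent choice $Y$, the number of environment moves $X$ not yet leading into the current winning set, and states are processed through a worklist. Each transition is then inspected a constant number of times, so the game is solved in $O(|\A|)$ time. A memoryless winning strategy is extracted in the same pass by recording, for each state, the $Y$ whose counter first reaches zero. The membership check on Line~7 is $O(1)$.
\end{itemize}

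\textbf{Combining.} Summing gives $O(|\A|) + O(2^{|\Phi|}\cdot|\Phi|) + 2^{|\Phi|}\cdot O(|\A|) = O(2^{|\Phi|}\cdot|\A|)$. The main obstacle is justifying the linear-time game solving. The first approach I would take is the counter-based attractor argument above, citing the well-known linear-time algorithm for reachability games. If one instead takes the fixed point literally, I would note that the bound holds with $|\A|$ interpreted as the cost of one full fixed-point evaluation over the arena.
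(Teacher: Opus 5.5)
Your proposal is correct and takes the same route as the paper: the loop runs at most $2^{|\Phi|}$ times, and each iteration is dominated by solving a reachability game on the fixed arena $\A$ in time linear in $|\A|$. You go beyond the paper by accounting for the product construction and the sorting on Line~2, and by justifying the linear-time bound with the counter-based attractor, which the paper only asserts. Your closing fallback is unnecessary: evaluating $\Win_i$ round by round costs $O(|\Q|\cdot|\A|)$ per game, so the attractor argument is the proof.
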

\begin{proof}
In the worst case, the algorithm explores all $2^{|\Phi|}$ subsets of $\Phi$. For each subset $\Psi$, constructing the DFA game $\mathcal{G}_\Psi$ and computing its winning set can be performed in linear time in the size of the product automaton. Thus, the time complexity is bounded by $O(2^{|\Phi|} \cdot |\mathcal{A}|)$.
\end{proof}

We note that we build the arenas $\bigotimes_{\varphi_i\in\Psi} \A^i$ individually for each $\Psi$.
Normally, the structures of the $\A^i$ are such that these automata are substantially smaller than $\A$ for $\Psi \neq \Phi$.
In this case the running time for $\Psi = \Phi$ normally dominates.

\section{\MaxValue Synthesis}\label{sec:max-obs-synt}
In this section, we first present a basic algorithm for computing a strategy for \maxvalue\ synthesis. 
We then introduce incremental \maxvalue synthesis, where the synthesis space is updated dynamically with the execution history trace, and show how to adapt the basic algorithm to construct an incremental \maxvalue\ strategy, followed by an improved algorithm.

\subsection{Basic Algorithm}\label{subsec:max-obs-basic}

The basic algorithm for the \maxvalue\ synthesis problem begins with automata construction and game arena construction, which coincide with the first two steps of the \maxg\ synthesis procedure described in \Cref{sec:max-g-synt}.
We describe the remaining steps as follows:

\paragraph{Step 3: Target Construction.} 
For each $\varphi_i \in \Phi$, we define $F^i_\otimes = \{(q_1,\cdots,q_n)\in \mathcal{Q} \mid q_i \in F^i\}$ as the set of states in the product automaton where $\varphi_i$ is among the satisfied specifications.  
We then consider the finite set of values $v_1 > v_2 > \cdots > v_\ell$ 
that $\sum_{\varphi \in \Psi} V(\varphi)$ can take for subsets $\Psi$ of $\Phi$, and define
$F_v = \bigcup \{F_\Psi \mid \sum_{\varphi \in \Psi} V(\varphi) \geq v\}$ for each of these values, where $F_{\Psi} = \bigcap_{\varphi_i \in \Psi} F^i_\otimes$.

\paragraph{Step 4: Game Solving.} 
We iterate over the value $v$ from Step 3 
in descending order.
We construct the DFA game $\mathcal{G}_v = (\A, F_v)$, using the product automaton 
$\A$ from Step 2 as the game arena and $F_v$ as the reachability winning objective.
Then we solve the game $\mathcal{G}_v$ by computing its winning set $\mathsf{Win}(\mathcal{G}_v)$.  The max-value $v^*$ is the first $v$ that satisfies $q_0 \in \mathsf{Win}(\mathcal{G}_v)$; we return a winning strategy of $\G_{v^*}$. \hfill \qedsymbol

The following lemma is key to the correctness of the basic algorithm of the \maxvalue synthesis problem.

\begin{restatable}{lemma}{lemMaxValueGame}\label{lem:max-value-game}
Let $\P = (\Phi, \X, \Y)$ with the observed value function $V$ be a \maxvalue synthesis problem and $\mathsf{Win}({\G_v})$ the agent winning region of the reachability game $\G_v = (\A, F_v)$, computed as above, where $v$ is an arbitrary value. 
    Then there is a strategy $\sigma$ with ensured value $v$, i.e.\ $V(\sigma) \geq v$ iff $q_0 \in \mathsf{Win}(\G_v)$.
\end{restatable}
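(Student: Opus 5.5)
The proof checks the two directions separately. It connects plays of the product arena $\A$ with traces induced by strategies through the determinism of $\A$. One preliminary step is needed. The observed value $V(\pi)$ of an infinite trace counts every $\varphi_i$ for which \emph{some} finite prefix of $\pi$ satisfies $\varphi_i$, and these prefixes may differ from formula to formula. The target $F_v$, by contrast, asks for a \emph{single} product state in which all formulas of some $\Psi$ are accepting at once.

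To bridge this, I first make each component DFA $\A^i$ absorbing on $F^i$: every transition leaving an accepting state is redirected to stay in $F^i$. This does not change which infinite traces have a prefix accepted by $\A^i$, which is the satisfaction notion used in the definition of winning strategies. With this normalisation, once $\run(\pi,\A)$ has entered $F^i_\otimes$ it remains there forever. Hence, for an infinite trace $\pi$ and $\Psi_\pi = \{\varphi_i \mid \pi \models \varphi_i\}$, there is an index $k$ with $\rho_k \in F_{\Psi_\pi}$. I expect this simultaneity issue to be the main obstacle. If the construction in Step~1 is not already intended to have absorbing accepting states, I would state the normalisation explicitly, since without it the lemma can fail.

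\textbf{($\Leftarrow$)} Suppose $q_0 \in \Win(\G_v)$, and let $\sigma$ be a winning strategy of the DFA game $\G_v$. Because $\A$ is deterministic, the current product state is a function of the input history, so $\sigma$ is a strategy $(2^\X)^*\to 2^\Y$. Fix any $\mathbf{X}$. The run of $\A$ on $\pi(\mathbf{X},\sigma)$ visits some state $q \in F_v$. By the definition of $F_v$, $q \in F_\Psi$ for some $\Psi$ with $\sum_{\varphi\in\Psi}V(\varphi)\ge v$. So for each $\varphi_i\in\Psi$ the prefix up to that point is accepted by $\A^i$, i.e.\ $\pi(\mathbf{X},\sigma)\models\varphi_i$. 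Since all values are positive, $V(\pi(\mathbf{X},\sigma))\ge v$. As $\mathbf{X}$ was arbitrary, $V(\sigma)\ge v$. The minimum in the definition of $V(\sigma)$ exists because $V(\pi)$ ranges over the finite set of subset sums.

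\textbf{($\Rightarrow$)} Suppose $V(\sigma)\ge v$. For each $\mathbf{X}$, the set $\Psi_{\mathbf{X}}$ of formulas satisfied by $\pi(\mathbf{X},\sigma)$ has value at least $v$. By the absorbing normalisation, the run eventually reaches $F_{\Psi_{\mathbf{X}}}\subseteq F_v$. So $\sigma$, viewed as a strategy in the arena, is winning in $\G_v$ from $q_0$. By the correctness of the least fixed point characterisation of reachability games recalled in the preliminaries, the states from which the agent has a winning strategy are exactly $\Win(\G_v)$. Hence $q_0\in\Win(\G_v)$.
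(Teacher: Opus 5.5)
Your argument has the same two-direction structure as the paper's, and your ($\Leftarrow$) direction is essentially the paper's. The substantive difference is in ($\Rightarrow$). There the paper obtains a single product state directly, asserting that ``by definition of observed value'' every induced run reaches some $q_k$ with $\sum_{i:\,q_k\in F^i_\otimes}V(\varphi_i)\ge v$. That is exactly the simultaneity step you isolate. Under the paper's own definitions, where an infinite $\pi$ satisfies $\varphi$ if \emph{some} finite prefix does, the paper does not justify this step.

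Your absorbing normalisation of the $\A^i$ is what makes the step valid, and you are right that without it the lemma can fail. Take $\varphi_1=\neg\StrongNext\ttrue$ and $\varphi_2=\StrongNext\neg\StrongNext\ttrue$. Every infinite trace satisfies both, through its prefixes of length $1$ and $2$, so every strategy has observed value $V(\varphi_1)+V(\varphi_2)$. Yet no finite trace satisfies both, so the unnormalised product never visits $F_{\{\varphi_1,\varphi_2\}}$, and $q_0\notin\Win(\G_v)$ for $v=V(\varphi_1)+V(\varphi_2)$.

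As for what each route buys, the paper's argument on the unmodified product is correct for a ``single stopping point'' reading of the observed value, namely $\max_k\sum_{\varphi:\,\pi^k\models\varphi}V(\varphi)$. This is the kind of quantity that appears in the proof of Theorem~\ref{thm:incremental}. Your argument is correct for $V(\pi)$ as literally defined. The price of your route is that Step~1, and hence the algorithms behind Theorems~\ref{thm:max-value} and~\ref{thm:incremental}, must be run on the absorbing DFAs. Under the single-prefix reading, that modification would compute a different, possibly larger, value.

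Since you commit explicitly to the written definition and state the normalisation, your proof is sound and more careful than the paper's.
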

\begin{proof}
    We prove the lemma in both directions.
    \textbf{($\Leftarrow$)} We need to show that if $q_0 \in \mathsf{Win}(\G_v)$ then there is a strategy $\sigma$ such that, $\forall\mathbf{X}\in (2^{\mathcal X})^\omega\, \forall \rho \in \pi(\mathbf{X},\sigma)\sum_{\varphi \in \Phi: \rho \models \varphi} V(\varphi) \geq v$. By construction, $q_0 \in \mathsf{Win}(\G_v)$ shows that there exists an agent strategy $\sigma$ such that, for every $\mathbf{X} \in (2^{\mathcal{X}})^\omega$, the induced play $\trace(\mathbf{X}, \sigma)$ is a winning play of the game $\G_v = (\A, F_v)$. That is to say, the run $\rho = \run(\pi, \A)$ eventually reaches a state in $F_v$ after finitely many steps. Therefore, there exists $k \geq 0$ and a reached state $q_k \in F_v$. By definition of $F_v$, 
    there exists a $\Psi\subseteq\Phi$ with $q_k \in \bigcap_{\varphi_i\in\Psi} F_i$ such that $\sum_{\varphi \in \Psi} V(\varphi) \geq v$ holds.
    Since $\A$ is the product automaton, reaching $q_k$ means that, for every $i$ with $q_k\in F_i$, the $i$-th component automaton $\A^i$ is in an accepting state after reading the finite prefix $\pi(\mathbf{X},\sigma)^k$.
    This implies
    $\pi(\mathbf{X},\sigma)^k\models \bigwedge_{\varphi \in \Psi}\varphi$ and thus $\pi(\mathbf{X},\sigma)\models \bigwedge_{\varphi \in \Psi}\varphi$ and
    $V(\pi(\mathbf{X},\sigma)) \geq \sum_{\varphi \in \Psi}V(\varphi) \geq v$.
    
    Therefore, for every
    $\mathbf{X}$ there exists a finite prefix that satisfies all formulas whose acceptance components contain $q_k$, and the sum of their values is at least $v$. Hence $\sigma$ guarantees value at least $v$, i.e.\ $V(\sigma)\geq v$.

    \noindent\textbf($\Rightarrow$) Assume there exists a strategy $\sigma$ with $V(\sigma)\geq v$. Then, by definition of observed value, for every input $\mathbf{X}\in(2^\X)^\omega$ there exists $k\geq 0$ such that along the induced run of $\A$ the reached product state $q_k$ satisfies $\sum_{i:\ q_k\in F^i_\otimes} V(\A^i) \geq v$, i.e.\ $V(q_k) \geq v$, and therefore $q_k \in F_v$. Thus, under $\sigma$, every play in the game $\G_v=(\A, F_v)$ reaches $F_v$ in finitely many steps. Hence $\sigma$ is a winning strategy for $\G_v$ from $q_0$, and consequently $q_0\in\mathsf{Win}(\G_v)$.
\end{proof}

\begin{restatable}{theorem}{thmMaxValue}\label{thm:max-value}
    Let $\P = (\Phi, \X, \Y)$ with the observed value function $V$ be a max-observation \LTLf synthesis problem and $v^*$ the first value (in descending order) such that $q_0\in\mathsf{Win}(\G_{v^*})$, where $\G_{v^*}=(\A,F_{v^*})$. Let $\sigma^*$ be any winning strategy for $\G_{v^*}$ returned by Step~4. Then $\sigma^*$ is a \maxvalue winning strategy for $\P$, and the maximal realisable value is $v^*$ (i.e.\ $V(\sigma^*)=v^*$).
\end{restatable}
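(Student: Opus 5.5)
The plan is to derive the theorem directly from Lemma~\ref{lem:max-value-game}, using two facts. First, the candidate values $v_1 > \cdots > v_\ell$ form a finite set. Second, the observed value of any strategy is always one of these candidates.

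\textbf{Step 1: $V(\sigma)$ takes only candidate values.} For every trace $\pi$, $V(\pi)=\sum_{\varphi\in\Psi}V(\varphi)$ with $\Psi=\{\varphi\in\Phi\mid\pi\models\varphi\}$, so $V(\pi)\in\{v_1,\dots,v_\ell\}$; the empty set gives value $0$, which we include as $v_\ell$.

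Hence $V(\sigma)=\min_{\mathbf X}V(\pi(\mathbf X,\sigma))$ ranges over a finite set, so the minimum is attained and is itself some $v_j$. The same argument shows that $\max_\sigma V(\sigma)$ exists and equals some $v_j$.

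It also follows that $q_0\in\mathsf{Win}(\G_{v_\ell})$ for $v_\ell=0$, because $F_0=\Q$. So the descending search in Step~4 always terminates and $v^*$ is well defined.

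\textbf{Step 2: $V(\sigma^*)\ge v^*$.} The strategy $\sigma^*$ is winning in $\G_{v^*}$ from $q_0$. Applying the ($\Leftarrow$) direction of Lemma~\ref{lem:max-value-game} to this specific strategy gives $V(\sigma^*)\ge v^*$; the lemma's proof establishes exactly this for any winning strategy of $\G_v$.

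\textbf{Step 3: no strategy exceeds $v^*$.} Suppose some $\sigma$ had $V(\sigma)>v^*$. By Step~1, $V(\sigma)=v_j$ for some $v_j>v^*$.

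The ($\Rightarrow$) direction of Lemma~\ref{lem:max-value-game} then gives $q_0\in\mathsf{Win}(\G_{v_j})$. But $v_j$ precedes $v^*$ in the descending iteration, which contradicts the choice of $v^*$ as the first such value.

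Therefore $\max_\sigma V(\sigma)\le v^*\le V(\sigma^*)$. Combined with the trivial bound $V(\sigma^*)\le\max_\sigma V(\sigma)$, we obtain $V(\sigma^*)=v^*=\max_\sigma V(\sigma)$. So $\sigma^*$ is a \maxvalue winning strategy.

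\textbf{Main obstacle.} The delicate point is Step~1: we must show that the minimum over uncountably many input sequences is attained and is a candidate value. Otherwise $V(\sigma)$ could lie strictly between two consecutive $v_j$, and the ``first value'' search would not capture the optimum.

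Finiteness of the range of $V(\pi)$ resolves this cleanly. One should also check that $F_v$ is monotone in $v$, i.e.\ $F_{v}\supseteq F_{v'}$ for $v\le v'$. This is not strictly needed for the argument, but it confirms that the winning condition $q_0\in\mathsf{Win}(\G_v)$ is downward closed, so the first success in the descending order really is the maximum.
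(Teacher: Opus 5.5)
Your proof is correct and is essentially the paper's argument. As there, $V(\sigma^*)\ge v^*$ comes from the ($\Leftarrow$) direction of Lemma~\ref{lem:max-value-game} applied to the returned strategy, and optimality comes from the ($\Rightarrow$) direction combined with the choice of $v^*$ as the first successful value in descending order. Your Step~1 only makes explicit what the paper leaves implicit when it passes from ``no strategy achieves any $v>v^*$'' to ``$V(\sigma)\le v^*$ for all $\sigma$'': every $V(\sigma)$ lies in the finite set $\{v_1,\dots,v_\ell\}$, so the minima and maxima are attained, and $F_0=\Q$ ensures that $v^*$ exists.
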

\begin{proof}
    Since $\sigma^*$ is winning for $\G_{v^*}$ and $q_0\in\mathsf{Win}(\G_{v^*})$, Lemma~\ref{lem:max-value-game} shows that $V(\sigma^*) \geq v^*$. To prove optimality, consider any $v$ 
    such that $v > v^*$. By the choice of $v^*$ as the \emph{first} value in descending order with $q_0\in\mathsf{Win}(\G_{v^*})$, we have $q_0\notin\mathsf{Win}(\G_v)$. Applying Lemma~\ref{lem:max-value-game} again, it follows that there is no strategy $\sigma$ with $V(\sigma) \geq v$. Hence no strategy can guarantee a value strictly larger than $v^*$, i.e.\ for all strategies $\sigma$ we have $V(\sigma) \leq v^*$. Therefore $V(\sigma^*)\geq v^*$ and $V(\sigma^*)\le v^*$, so $V(\sigma^*)=v^*$. Moreover, $\sigma^*$ gives the maximal guaranteed value, and is thus a \maxvalue winning strategy that solves the \maxvalue synthesis problem.
\end{proof}

We conclude by showing that the complexity of the basic algorithm matches that of the \maxvalue\ \LTLf\ synthesis problem.

\begin{restatable}{theorem}{thmMaxValueComplexity}\label{thm:max-value-complexity}
The running time of the basic algorithm is $O(2^{|\Phi|} \cdot |\mathcal{A}|)$, where $|\mathcal{A}|$ is the size of the product automaton $\A$.
\end{restatable}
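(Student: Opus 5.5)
The plan is to bound the cost of each of the four steps of the basic algorithm separately, following the pattern of the proof of Theorem~\ref{thm:complexity}, and to check that the loop dominates. Steps~1 and~2 (building the component DFAs and the product $\A$) are shared with max-guarantee synthesis and cost $O(|\A|)$, since the product is constructed once and each $\A^i$ is no larger than $\A$.

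\textbf{Step 3.} There are at most $2^{|\Phi|}$ subsets $\Psi\subseteq\Phi$, so at most $\ell\le 2^{|\Phi|}$ distinct values $v_1>\dots>v_\ell$; sorting them costs $O(2^{|\Phi|}\cdot|\Phi|)$, which is absorbed since $|\Phi|\le|\A|$ (each component contributes at least one state, and the product has at least as many states as components once we count sizes reasonably). For the targets, I would avoid forming each union $F_v=\bigcup\{F_\Psi\mid \sum_{\varphi\in\Psi}V(\varphi)\ge v\}$ subset by subset. Instead, for every product state $q$, compute its value $V(q)=\sum_{i:\,q\in F^i_\otimes}V(\varphi_i)$ in one pass. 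Then $q\in F_v$ iff $V(q)\ge v$, because $q\in F_\Psi$ exactly when $\Psi\subseteq\{\varphi_i\mid q\in F^i_\otimes\}$ and values are positive. Consequently, each $F_v$ is obtained by a linear scan, costing $O(|\A|)$ per value.

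\textbf{Step 4.} For each of the at most $\ell\le 2^{|\Phi|}$ values, solve the reachability game $\G_v=(\A,F_v)$ with the least fixed point $\Win_i$. This can be done in time linear in $|\A|$ (states plus transitions) with the standard attractor algorithm using predecessor counters, and a positional winning strategy is extracted in the same pass. Summing over all iterations gives $O(2^{|\Phi|}\cdot|\A|)$, which dominates Steps~1--3.

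The main obstacle is the Step~3 accounting. A naive construction of $F_v$ as a union over all $\Psi$ with value at least $v$ would cost $2^{|\Phi|}\cdot|\A|$ per value, hence $4^{|\Phi|}\cdot|\A|$ overall, which exceeds the claimed bound. The per-state value characterisation of $F_v$ is what rescues linearity, so I would state it explicitly as a small observation. Finally, one could add that this running time, combined with the doubly exponential size of $\A$ in the formula length, yields the 2EXPTIME membership claimed in Theorem~\ref{thm:opt-synt-complexity}.
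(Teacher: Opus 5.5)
Your argument is correct and is essentially the paper's: at most $\ell \le 2^{|\Phi|}$ candidate values, each needing one reachability-game solve on $\A$ in time linear in $|\A|$. Your observation that $q \in F_v$ iff $V(q) \ge v$ (which the paper itself uses in the proof of Lemma~\ref{lem:max-value-game}) adds something the paper's proof only asserts: it shows why each target can be built by a linear scan rather than a $2^{|\Phi|}$-fold union.

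One small caveat: like the paper, read the bound as covering only the work done once the arena $\A$ is available. Your claim that Step~1 costs $O(|\A|)$ does not hold in general, because the \LTLf-to-DFA translation can take more time than the size of the automata it outputs.
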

\begin{proof}
In the worst case, the algorithm explores all values $v_1,\ldots,v_\ell$ and all $2^{|\Phi|}$ subsets of $\Phi$ have different values, i.e.\ $\ell= 2^{|\Phi|}$. For each value $v$, constructing the DFA game $\mathcal{G}_v$ and computing its winning set can be performed in linear time in the size of the product automaton. Thus, the time complexity is bounded by $O(2^{|\Phi|} \cdot |\mathcal{A}|)$.
\end{proof}


We note that this can be improved to $O(|\Phi| \cdot |\mathcal{A}|)$ (or: $O(\ln(\ell) \cdot |\mathcal{A}|))$ by considering the values along an ordered tree: if there are $f>1$ feasible values left, we can next test for the $f/2$ largest among them (rounded up or down), reducing the feasible values to $\lceil f/2\rceil$ and providing $\ln(\ell)$ iterations~(e.g.\ $O(\ln(|\Phi|))$ if all formulas have the same value).

The above basic algorithm solves the \maxvalue synthesis problem. We extend this algorithm in the next subsection by introducing the concept of \emph{incremental \maxvalue synthesis}, where the synthesis space is updated dynamically with the execution history trace.



\subsection{Incremental Max-Observation 
Synthesis}
The goal of the principled algorithm from the previous section is to provide maximal ensured value. However, after a prefix $h \in (2^{AP})^*$, we might be in a position to provide \emph{better} ensured values.
We first define the set of \emph{prefixes} of a strategy $\sigma$, denoted $\mathsf{pre}(\sigma) \subseteq (2^{AP})^*$, the set of prefixes of $\{\pi(\mathbf{X},\sigma) \mid \mathbf{X}\in (2^{\mathcal{X}})^\omega\}$ of runs induced by $\sigma$.
Conversely, we define strategies compatible with a history $h \in (2^{AP})^*$ as $\mathsf{comp}(h)=\{\sigma \in (2^\mathcal{X})^*\rightarrow2^\mathcal{Y} \mid h \in \mathsf{pre}(\sigma)\}$.
Then for all $h\in \mathsf{pre}(\sigma)$, we define 
$$V_h(\sigma)= \min \{V(h\cdot\rho) \mid h\cdot \rho \in \pi(\mathbf{X},\sigma),\ \mathbf{X}\in (2^{\mathcal X})^\omega\}.$$

\begin{definition}[Incremental \maxvalue optimal strategy]\label{def:inc-max-value-win}
Given $\P = (\Phi, \mathcal{X},\mathcal{Y})$ with an observed value function $V$, a strategy $\sigma^*:(2^\mathcal{X})^*\rightarrow2^\mathcal{Y}$ is \emph{observation optimal} for a history $h\in \mathsf{pre}(\sigma^*)$ for $\P$ if \[V_h(\sigma^*) = \max_{\sigma\in\mathsf{comp}(h)}V_h(\sigma)\]
holds, and \emph{incremental observation optimal} if it is observation optimal for all histories $h\in \mathsf{pre}(\sigma)$ for $\P$.
\end{definition}

\paragraph{\textbf{Extending the Basic Algorithm.}}
To synthesise an incremental observationally optimal strategy, we can adapt the basic algorithm from \Cref{subsec:max-obs-basic}. 
In particular, in Step~4, after solving each DFA game~$\mathcal{G}_v$, we record for every game state~$q$ the maximal value~$v_q$ such that $q \in \mathsf{Win}(\mathcal{G}_{v_q})$, together with the corresponding local strategy $\sigma_q \in 2^{\mathcal{Y}}$. 
This local strategy can be viewed as the first move of an optimal strategy for the game~$\mathcal{G}_{v_q}$ when the initial game state is replaced by~$q$. 
Finally, we combine the local strategies~$\sigma_q$ for all game states~$q$ into a global strategy $\sigma \colon (2^{\mathcal{X}})^* \to 2^{\mathcal{Y}}$ and show that $\sigma$ is an incremental observationally optimal strategy.

The correctness of this procedure follows from the fact that whenever a game state~$q$ of the product automaton~$\A$ is reached under the strategy~$\sigma$, it follows the corresponding local strategy~$\sigma_q$, thus achieving the maximal observed value~$v_q$ when~$q$ is treated as the initial state of the game.

\paragraph{\textbf{Improved Algorithm.}}
While the basic algorithm can be adapted to compute an incremental max-observation optimal strategy, it is theoretically \emph{inefficient}, as each game state and state-action pair may be considered multiple times: a state $q \in \mathsf{Win}(\mathcal{G}_{v_i})$ also belongs to $\mathsf{Win}(\mathcal{G}_{v_j})$ for all $j > i$.  
To synthesise an incremental max-observation optimal strategy efficiently, we retain the first two steps of the algorithm and then perform incremental game solving and target construction directly.

\paragraph{Step 3: Incremental Max-Obs Optimal Strategy Computation.} 
    We set $\mathsf W_0 = \emptyset$ and then iteratively calculate targets of increasing size as follows for $k=1,\ldots,\ell$:
    \begin{itemize}
        \item 
    $F_{v_k}^= = \bigcup \{ F_\Psi \mid \sum_{\varphi \in \Psi} V(\varphi) = v_k\}$
    \item $T_k = W_{k-1} \cup F_{v_k}^=$,
     $\G_k = (\A,T_k)$, and
    $W_k = \mathsf{Win}(\G_k)$
    \end{itemize}
until $q_0 \in W_k$.

    We use the attractor strategies to each target $T_k$ to define $\sigma^*$~(and fix it arbitrarily elsewhere). \hfill \qedsymbol

    

The correctness of the improved algorithm is based on two facts:
(1) $W_k = \mathsf{Win}(\G_{v_k})$ where $\G_{v_k} = (\A, F_{v_k})$ is the DFA game constructed in Step~4 of \Cref{subsec:max-obs-basic}; and 
(2) for any history $h$, if the maximal achievable value has not already been observed along $h$, then the state $q_h$ reached by $h$ must belong to $\mathsf{Win}(G_{v^*})$ to enforce $v^*$.

\begin{restatable}{theorem}{thmInc}\label{thm:incremental}
    Let $\P = (\Phi, \mathcal{X},\mathcal{Y})$ with an observed value function $V$ be an incremental \maxvalue \LTLf synthesis problem and $v_{k^*}$ 
    the first value (in descending order) such that $q_0\in \mathsf{Win}(\G_{k^*})$, and $\sigma^*$ be a strategy returned by our algorithm. Then $\sigma^*$ is incremental observation optimal.
\end{restatable}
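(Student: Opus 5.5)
We follow the two facts stated just before the theorem: first we identify the sets $W_k$ with the winning regions of the basic algorithm, and then we show that the attractor strategy is locally optimal after every history.

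\textbf{Step 1: $W_k = \mathsf{Win}(\G_{v_k})$.} We prove this by induction on $k$. For $k=1$ we have $T_1 = F^=_{v_1} = F_{v_1}$, since $v_1$ is the largest value. For the inductive step, assume $W_{k-1} = \mathsf{Win}(\G_{v_{k-1}})$. Because $F_{v_k} = F_{v_{k-1}} \cup F^=_{v_k}$, we get
\[F_{v_k} \subseteq T_k = W_{k-1} \cup F^=_{v_k} \subseteq \mathsf{Win}(\G_{v_k}).\]
The right inclusion uses monotonicity: $F_{v_{k-1}} \subseteq F_{v_k}$, hence $\mathsf{Win}(\G_{v_{k-1}}) \subseteq \mathsf{Win}(\G_{v_k})$. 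Reachability winning regions are monotone in the target, and $\mathsf{Win}(\G)$ is closed under replacing the target by any set lying between the target and the winning region. Therefore $\mathsf{Win}(\G_k) = \mathsf{Win}(\G_{v_k})$.

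\textbf{Step 2: rank function and strategy.} For each state $q$ let $v_q$ be the largest $v_k$ with $q \in W_k$, taking the last computed index if the loop stopped at $k^*$. By Step~1 and Lemma~\ref{lem:max-value-game}, applied with initial state $q$, $v_q$ is the optimal value achievable from $q$, at least for the values considered. The strategy $\sigma^*$ is defined as follows. At a state $q$ with $v_q = v_k$ and $q \notin F^=_{v_k}$, it plays the attractor move for $T_k$, which decreases the attractor rank within $W_k$.

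\textbf{Step 3: optimality after every history.} Fix $h \in \mathsf{pre}(\sigma^*)$ and let $q_h$ be the product state it reaches. We prove $V_h(\sigma^*) \ge V_h(\sigma)$ for every $\sigma \in \mathsf{comp}(h)$. Since acceptance is monotone along prefixes (any accepted prefix counts), we have
\[V_h(\sigma) = \max\bigl(\text{value observed on } h,\ \text{value enforced from } q_h\bigr).\]
More precisely, $V_h(\sigma)$ is at most the maximum of the best value already observed on some prefix of $h$ and $v_{q_h}$, where $v_{q_h}$ bounds what is enforceable from $q_h$ by Lemma~\ref{lem:max-value-game}. For the matching lower bound we argue that, under $\sigma^*$, every continuation from $q_h$ reaches $F^=_{v'}$ for some $v' \ge v_{q_h}$. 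Following the attractor of $T_k$, the play either lands in $F^=_{v_k}$, or enters $W_{k-1}$; in the latter case the state's rank rises and the attractor toward $T_{k'}$ with $k' < k$ takes over. By induction on $k$ together with the attractor rank, this ends in some $F^=_{v'}$ with $v' \ge v_{q_h}$. Combined with what has already been observed on $h$, this gives the bound.

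\textbf{Main obstacle.} The delicate part is Step~3. One issue is that the algorithm stops at $k^*$, so states with $v_q < v_{k^*}$ are never ranked. We argue that such states are never visited by histories in $\mathsf{pre}(\sigma^*)$, since $\sigma^*$ keeps the play inside $W_{k^*}$ until $F^=$ is hit. Once a target value has been observed, every later choice remains optimal, because no continuation can lower the observed value. The other issue is the combination of values observed along $h$ with the value attainable from $q_h$. This needs the maximum-over-prefixes reading of $V$, which we make explicit by using the product state's accepting components.
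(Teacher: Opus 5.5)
Your proposal is correct and follows the paper's proof. It uses the same induction establishing $W_k=\mathsf{Win}(\G_{v_k})$ through $F_{v_k}\subseteq T_k\subseteq\mathsf{Win}(\G_{v_k})$ and monotonicity. It then makes the same per-history split between the value already observed on $h$ and the value enforceable from $q_h$; your lexicographic (level, attractor rank) argument and your handling of the stopping index $k^*$ spell out what the paper leaves implicit.

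One justification should be tightened: it is not true that every choice is optimal once a target has been hit. If the environment then moves into some $W_j$ with $v_j$ above the observed value, the attractor moves of $\sigma^*$ are still needed, and your Step~3 does cover this. For states $q_h\notin W_{k^*}$, the right reason is that, by Lemma~\ref{lem:max-value-game} applied from $q_h$, no strategy enforces $v_{k^*}$ there. The already observed value is at least $v_{k^*}$, because $\sigma^*$ only leaves $W_{k^*}$ from a target state, so it is optimal.
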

\begin{proof}
Let $\G_{v_k} = (\A, F_{v_k})$ be the DFA game constructed in Step~4 of \cref{sec:max-obs-synt}.
We first show by induction that $W_k = \mathsf{Win}(\G_{v_k})$ is constructed using that all of these games are played on the same arena.

As induction basis, for $k=1$ we have $\G_{v_1}=\G_1$ as $F_{v_1} = F_{v_1}^= = F_{v_1}^= \cup W_0$.

For the induction step, we assume $W_k = \mathsf{Win}(\G_{v_k})$ and show $W_{k+1} = \mathsf{Win}(\G_{v_{k+1}})$ for $k<k^*$.

For this, we use that 
$W_k =\mathsf{Win}(\G_{v_k})\subseteq \mathsf{Win}(\G_{v_{k+1}})$ holds, which is entailed by the larger target  $F_{v_{k+1}} \supseteq F_{v_{k}}$.
At the same time, $W_k =\mathsf{Win}(\G_{v_k})\supseteq F_{v_k}$, as the winning region clearly includes the target, while $F_{v_{k+1}}^= \subseteq F_{v_{k+1}} \subseteq \mathsf{Win}(\G_{v_{k+1}})$ holds by definition and by the winning region including the target, respectively.

Together with $F_{v_{k+1}} = F_{v_{k}} \cup F_{v_{k+1}}^=$ and $T_{k+1} = W_k \cup F_{v_{k+1}}^=$, this provides $F_{v_{k+1}} \subseteq T_{k+1} \subseteq \mathsf{Win}(\G_{v_{k+1}})$.

By monotonicity we get $\mathsf{Win}(\G_{v_{k+1}}) \subseteq \mathsf{Win}(\G_{k+1}) \subseteq \mathsf{Win}\big((\A,\mathsf{Win}(\G_{v_{k+1}}))\big) = \mathsf{Win}(\G_{v_{k+1}})$, where the final equation is provided by the fixed point construction of the winning region. 
This provides $W_{k+1}= \mathsf{Win}(\G_{v_{k+1}})$.

Let now $v_h^* = \max_{\sigma\in\mathsf{comp}(h)}V_h(\sigma)$ for some $h\in \mathsf{pre}(\sigma^*)$.
We first note that $v_h^*\geq v_{k^*}$ must hold as the minimisation reaches over fewer runs. 

If $\max_{k<|h|} \sum_{\varphi \in \Phi: h^k \models \varphi}V(\varphi) =  v_h^*$, then there is nothing to show as the value has already been observed.
Otherwise, we must be in a position to enforce a value of at least $v_h^*$ from the state $q_h$ reached on $h$, which (as we have discussed in the proof of Theorem \ref{thm:max-value}) requires $q_h$ to be in $\mathsf{Win}(\G_{v_h^*})$.
\end{proof}

While the improved algorithm works on the same arena as the algorithm we described in the previous section, it never considers a state-action pair twice.
When explicitly constructing the arena $\A$, the sets $F_{v_k}^=$ can be built with it. In this case, each state is considered at most twice across all fixed point computations: once when added to one of the $F_{v_k}^=$ and once when added to a winning region during fixed point construction.

\begin{restatable}{theorem}{thmIncComplexity}\label{thm:incremental-complexity}
The running time of the improved algorithm is $O(|\mathcal{A}|)$, where $|\mathcal{A}|$ is the size of the product automaton $\A$.
\end{restatable}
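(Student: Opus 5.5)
The plan is an amortised counting argument. Over all $\ell$ rounds of Step~3, each state and each state-action pair (edge) of $\A$ is processed only a constant number of times. The rounds share a single arena, and the winning regions grow monotonically, $W_0 \subseteq W_1 \subseteq \cdots$, since $T_{k+1} = W_k \cup F^=_{v_{k+1}} \supseteq W_k$ and the winning region contains its target. I will therefore not recompute each $\mathsf{Win}(\G_k)$ from scratch. Instead, round $k+1$ resumes the backward attractor computation from the region $W_k$ already found and seeds it with the newly added target states $F^=_{v_{k+1}} \setminus W_k$.

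First I construct $\A$ explicitly together with the sets $F^=_{v_k}$. While enumerating each product state $q$, I compute its value $V(q) = \sum_{i: q \in F^i_\otimes} V(\varphi_i)$ and place $q$ into a bucket indexed by that value. Every subset $\Psi$ with $q \in F_\Psi$ satisfies $\Psi \subseteq \{\varphi_i \mid q \in F^i_\otimes\}$, so each state should be tagged into the bucket of its full value $V(q)$ (which is how $F^{=}_{v_k}$ is used, up to membership in $W_{k-1}$). Sorting the at most $|\Q|$ distinct values, or treating $|\Phi|$ as a constant factor as in Theorem~\ref{thm:complexity}, keeps this within $O(|\A|)$.

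Second, I implement the attractor computation in the standard linear-time way. For every pair $(q,Y)$ I keep a counter of the successors $\delta(q,X\cup Y)$ over all $X$ that are not yet in the current region, together with predecessor lists for every state. A state enters the region exactly once. When it enters, each incoming edge is scanned once and its counter is decremented. A pair whose counter reaches zero puts its source state into the region, and that pair is recorded as the attractor move $\sigma^*(q) = Y$. Because regions only grow across rounds, counters never need to be reset: the fixed point of round $k$ is a valid partial state for round $k+1$, since $\mathrm{PreC}$ is monotone and $\mathsf{Win}(\A,T_{k+1})$ is the least fixed point above $T_{k+1} \supseteq W_k$. The induction $W_k = \mathsf{Win}(\G_{v_k})$ from the proof of Theorem~\ref{thm:incremental} confirms that this resumption yields the correct regions.

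The counting then follows: each state is touched once when it is bucketed into some $F^=_{v_k}$, and once when it is added to a winning region. Each edge is decremented at most once over the entire run, because decrements happen only when the edge's target enters the region. The total cost is therefore $O(|\Q| + |\delta|) = O(|\A|)$. The main obstacle is justifying the counter reuse across rounds, that is, that adding new target states never invalidates earlier counters or strategy choices. This is exactly monotonicity: states only ever move into the region and never leave it. The moves recorded in earlier rounds remain attractor moves toward $T_k$ for the correct $k$, which is what Theorem~\ref{thm:incremental} requires.
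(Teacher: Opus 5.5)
Your proposal is correct and follows the paper's own justification, which is the remark just before the theorem: $\A$ and the sets $F^=_{v_k}$ are built once and the regions $W_k$ only grow, so each state is handled at most twice (once when placed in some $F^=_{v_k}$, once when it joins a winning region) and no state-action pair is processed twice. You make this explicit by resuming the standard counter-based attractor instead of restarting it, and by correctly observing that bucketing each state by its full value $V(q)$ suffices, since $T_k = W_{k-1}\cup\{q \mid V(q)=v_k\}$; the one loose end, which the paper also glosses over, is sorting the values, where linearity should come from there being at most $2^{|\Phi|}$ distinct values (with the $|\Phi|$ factor absorbed into constructing $\A$) rather than from the bound $|\Q|$.
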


This is in contrast to the principled algorithm from the previous section, where every state and state-action pair can be considered once per fixed point construction.

As an additional computational advantage, computing $\mathsf{Win}(\G_k)$ could take fewer steps than computing $\mathsf{Win}(\G_{v_k})$ since we start with a larger target ($T_k \supseteq F_{v_k}$).

Note that the binary search described after Theorem~\ref{thm:max-value-complexity} can not be simply utilised in the improved algorithm. This is because the improved algorithm requires $W_k = \mathsf{Win}(\G_{v_k})$, relying on the monotonicity $\mathsf{Win}(\G_{v_k})\subseteq\mathsf{Win}(\G_{v_{k+1}})$, to enforce the maximal achievable value from each reached state. Skipping intermediate values would discard this information.

\subsection{Combination with Max-Guarantee}
It is straightforward to combine (incremental) \maxvalue with \maxg: to compute the (incremental) \maxvalue value subject to guaranteeing a given subset $\Gamma \subseteq \Phi$ of $\Phi$, we can simply define $F_{v,\Gamma} = \bigcup \{F_\Psi \mid \sum_{\varphi \in \Psi} V(\varphi) \geq v,\ \Psi \supseteq \Gamma\}$ and $F_{v,\Gamma}^= = \bigcup \{F_\Psi \mid \sum_{\varphi \in \Psi} V(\varphi) = v,\ \Psi \supseteq \Gamma\}$.

We can then find optimal solutions and values $v_\Psi^*$ for (incremental) max-observation for each $\Psi \supseteq \Gamma$ individually for realisable $\Psi$ (note that, for $\Psi\neq \emptyset$, $\Psi$ is realisable iff $v_\Psi^*>0$ holds, so that the realisability of $\Psi$ is obtained as a side product and does not have to be tested individually), and find the best value $v_\Psi^* + \sum_{\varphi \in \Psi}G(\varphi)$ for realisable $\Psi$.

\section{Symbolic Optimal Synthesis}
In this section, we introduce symbolic techniques of our proposed approaches to optimal \LTLf synthesis. The symbolic techniques below are faithful encodings of the explicit constructions, following the symbolic framework of~\cite{ZhuTLPV17}. Their correctness follows from Theorems~\ref{thm:correctness},~\ref{thm:max-value}, and~\ref{thm:incremental}.

\paragraph{Symbolic DFA Games.} We utilise the \emph{symbolic} techniques in~\cite{ZhuTLPV17}, where the symbolic representation of a DFA $\A$ = $(2^{AP}, \Q, q_0, \delta, -)$ is a tuple $\A_s = (\X, \Y, \Z, I, \eta)$, where $\Z$ is a set of variables such that $|\Z| = \lceil \log |\Q| \rceil$, and every state $q \in \Q$ corresponds to an interpretation $Z \in 2^\Z$ over $\Z$. $I$ is a Boolean formula satisfied only by the interpretation of the initial state $q_0$. $\eta \colon 2^\X \times 2^\Y \times 2^\Z \rightarrow 2^\mathcal{Z}$ is a Boolean function representing the transition function such that $\eta = \langle \eta_{z_0}, \cdots , \eta_{z_m} \rangle$, where $\eta_z \colon 2^\X \times 2^\Y \times 2^\Z \rightarrow \{0, 1\}$. Accordingly, we denote the symbolic reachability game as $\G_s = (\A_s, g)$, where $g$ is a Boolean formula over $\Z$ representing the reachability objective.

The game can be solved by a symbolic fixpoint computation over two Boolean formulas $w$ over $\Z$ and $t$ over $\Z \cup \Y$, which represent the agent winning region and winning state-action pairs, respectively. The initial step is $w_0(\Z) = g(\Z)$ and $t_0(\Z, \Y) = g(\Z)$. $t_{i+1}$ and $w_{i+1}$ are constructed as follows:

\begin{align*}
     & t_{i+1}(\Z, \Y) = t_i(\Z, \Y) \lor (\neg w_i(\Z) \land \forall X. w_i(\eta(\Z, \X, \Y))) \\ 
     & w_{i+1}(\Z) = \exists Y. t_i(\Z, \Y)
\end{align*}

The computation terminates when $I \models w_i$. Then, we can use $t_{i+1}$ to compute a winning strategy through the mechanism of Boolean functional synthesis~\cite{FriedTV16}. Details of each step can be found in~\cite{ZhuTLPV17}.

\begin{figure*}[h!]
    \centering
    \subfloat[Runtime: basic max-observation (x-axis) vs. max-guarantee and extended max-observation (y-axis).]{
        \centering
        \includegraphics[width=0.3\textwidth]{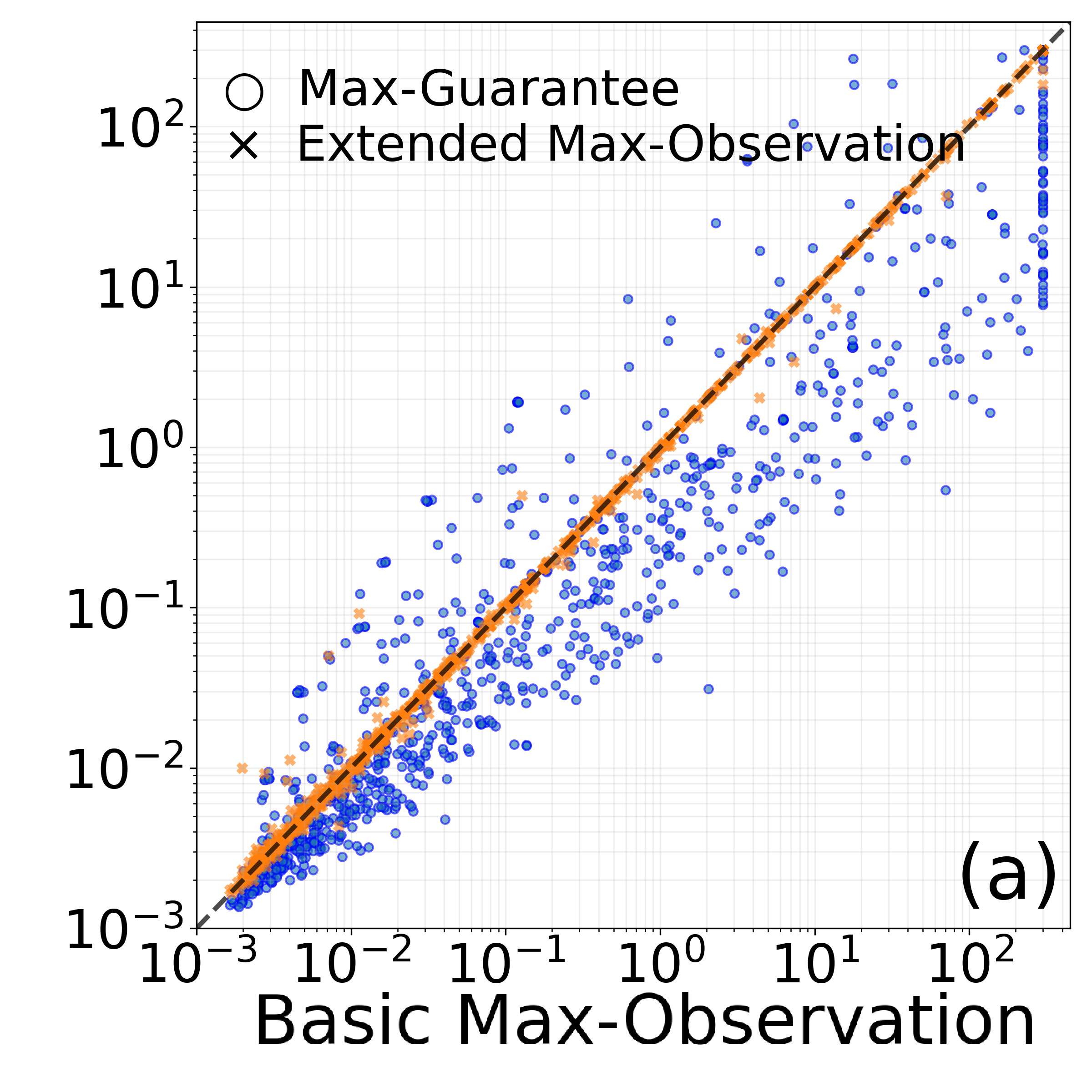}
    	\label{fig:a}
    }
    \hfill
    \subfloat[Maximum BDD size during fixpoint~(preimage) computations in symbolic game solving: basic max-observation vs. max-guarantee.]{
        \centering
       \includegraphics[width=0.3\textwidth]{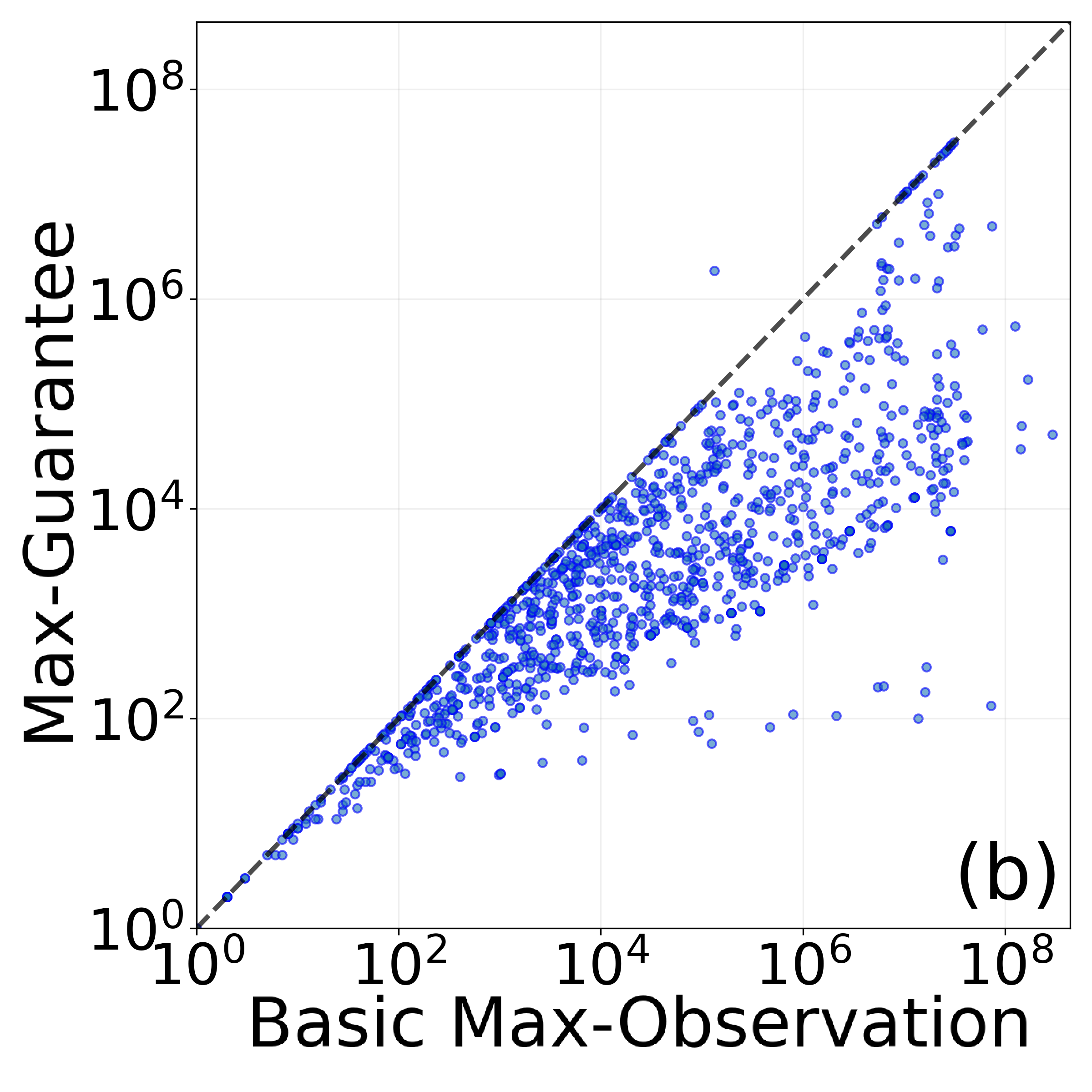}
	  \label{fig:b}
    }
    \hfill
    \subfloat[Runtime: improved max-observation (x-axis) vs. extended max-observation (y-axis).]{
        \centering
        \includegraphics[width=0.3\textwidth]{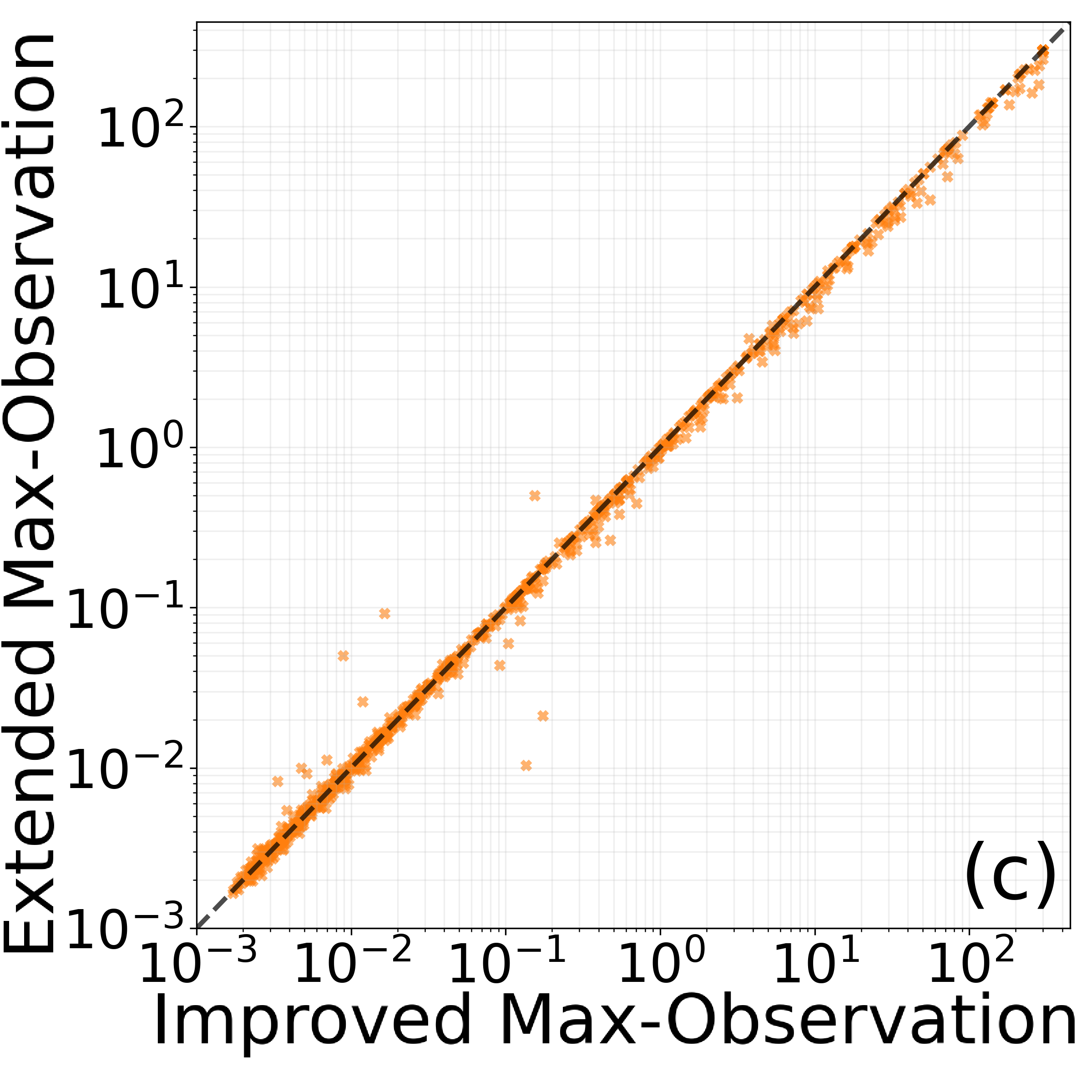}
        \label{fig:c}
    }
    \\[1em]
    \subfloat[Maximum BDD size during fixpoint~(preimage) computations in symbolic game solving: extended max-observation vs. improved max-observation.]{
        \centering
        \includegraphics[width=0.3\textwidth]{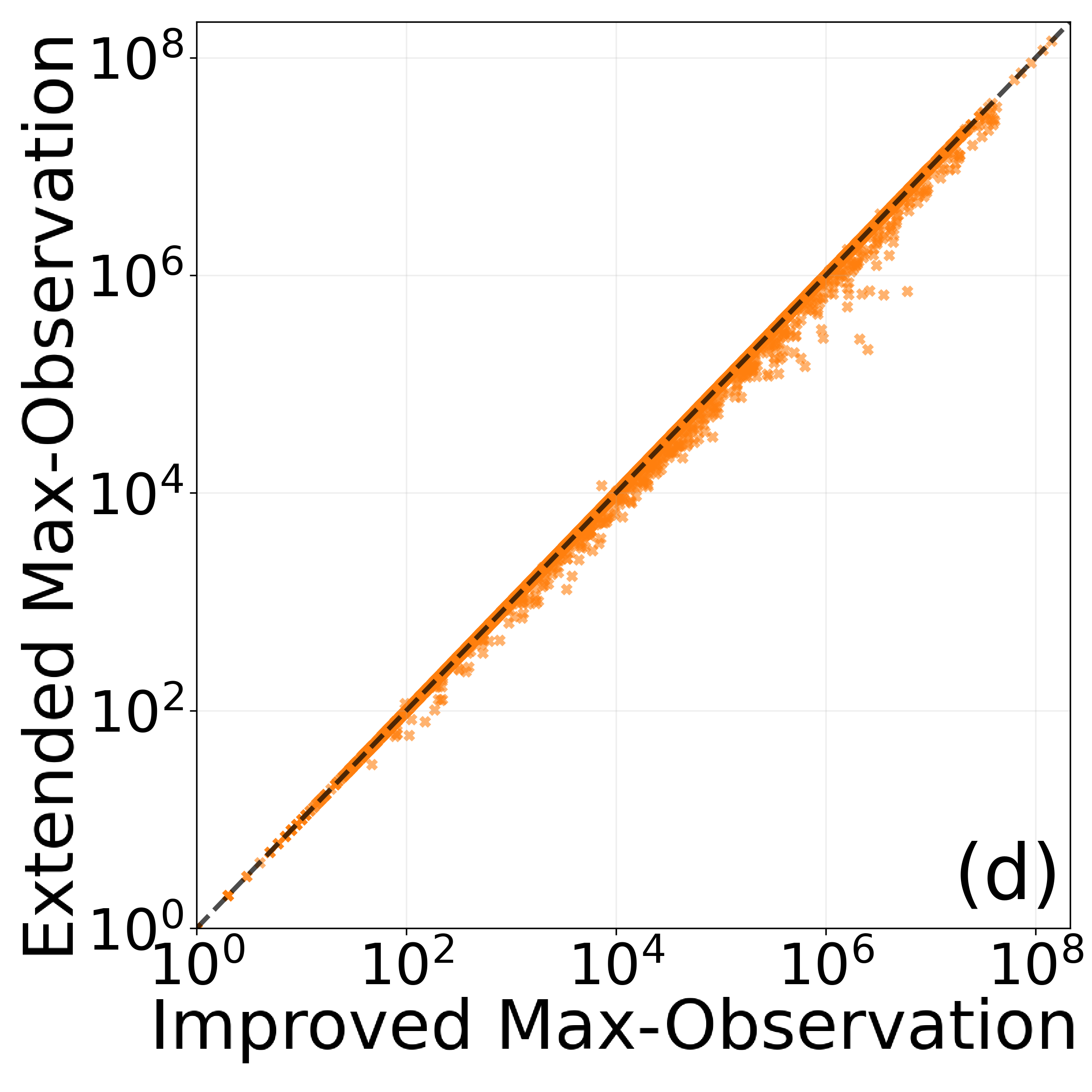}
    	\label{fig:d}
    }
    \hfill
    \subfloat[BDD size of the winning state–action relation $t$: basic max-observation (x-axis) and extended max-observation (y-axis).]{
        \centering
        \includegraphics[width=0.3\textwidth]{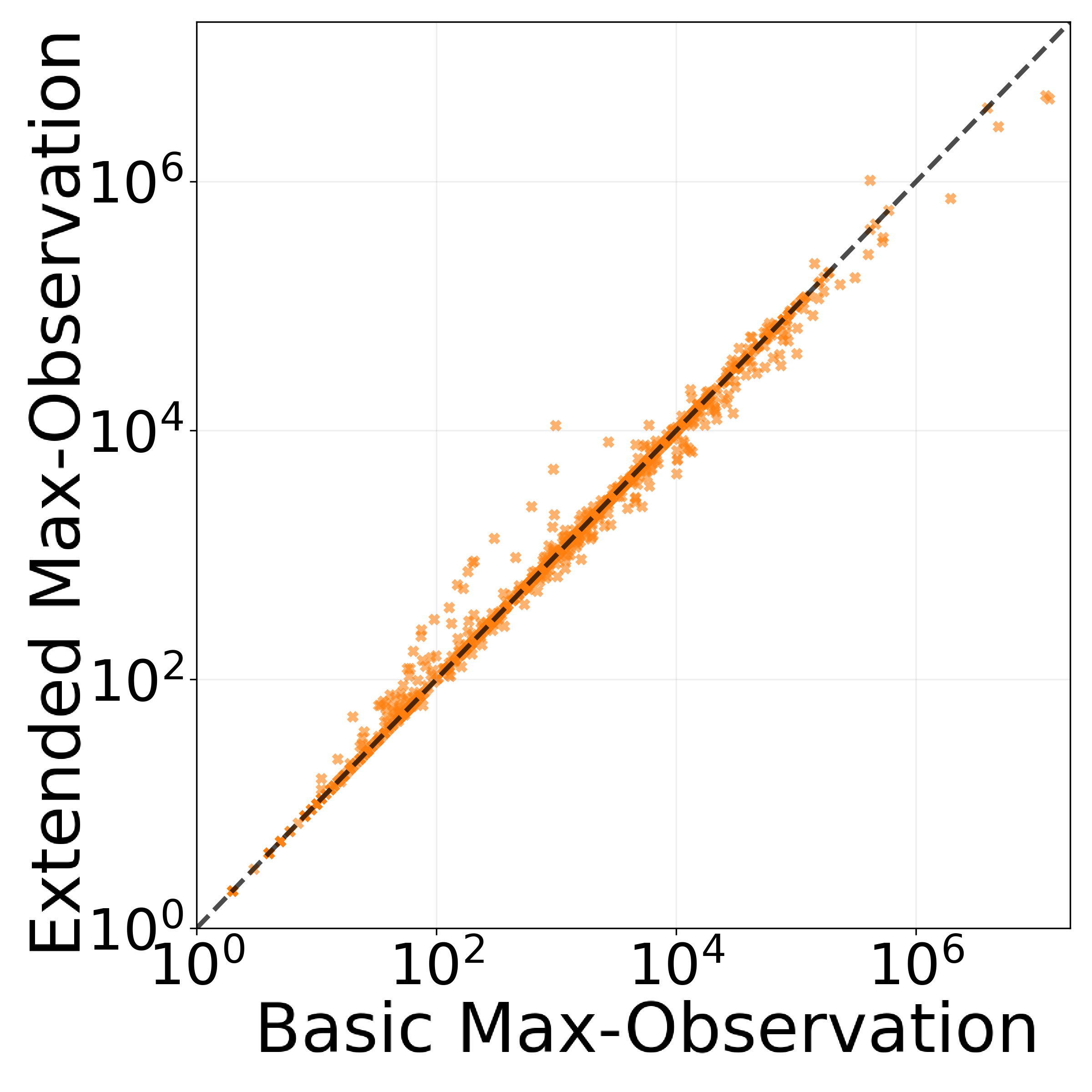}
        \label{fig:e}
    }
    \hfill
    \subfloat[Comparison of the number of preimage computations required to reach the fixpoint between improved max-observation (x-axis) and extended max-observation (y-axis).]{
        \centering
        \includegraphics[width=0.3\textwidth]{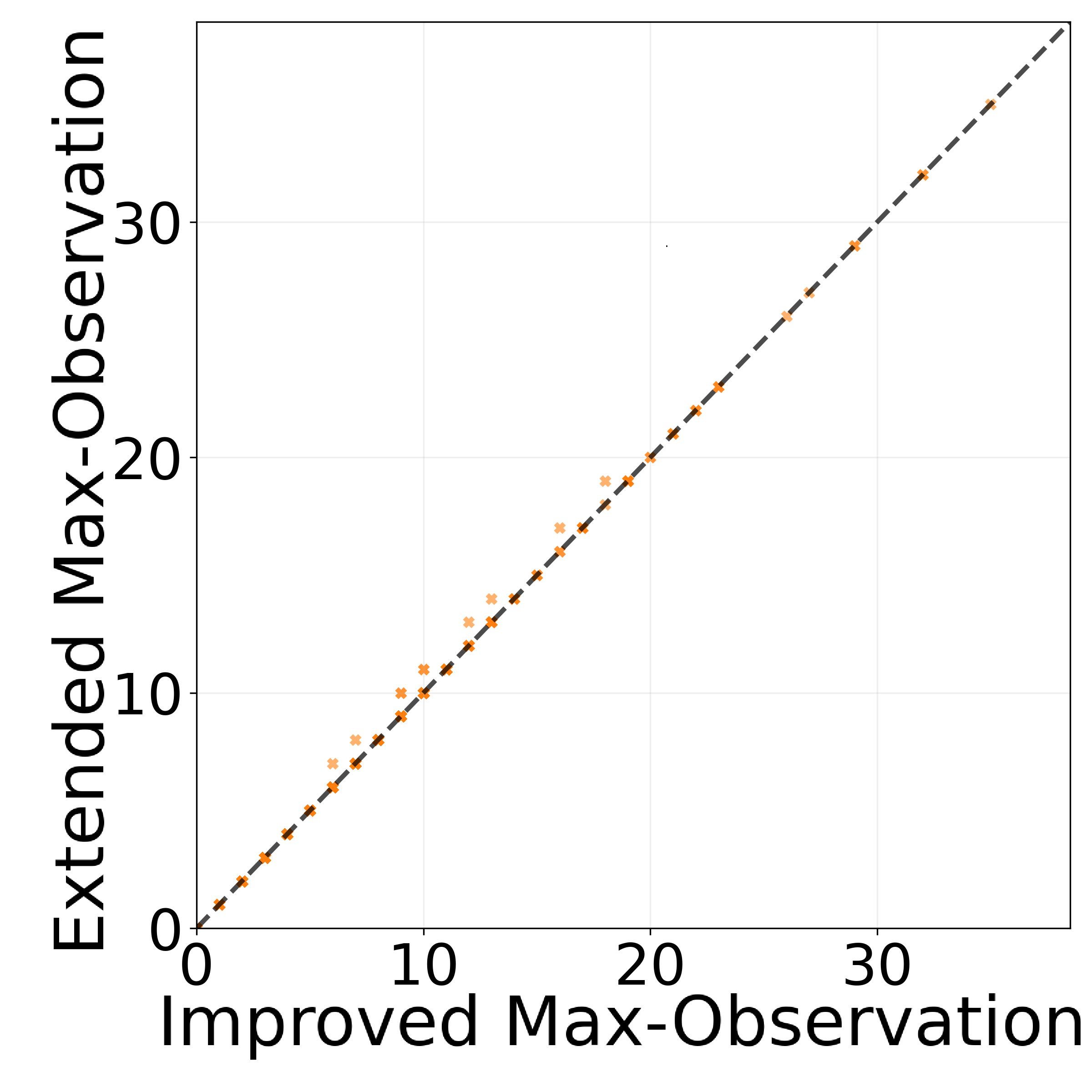}
    	\label{fig:f}
    }
    
    \caption{
    Each point corresponds to one benchmark instance in (a)-(c), one fixpoint step~(preimage computation) in (d)-(e), and the number of fixpoint steps in (f); points below the diagonal indicate better performance of the method on the y-axis.}
\label{fig:evaluation}
\end{figure*}

\subsection{Symbolic Max-Guarantee Synthesis}\label{sec:sym-max-g}

The symbolic characterisation of max-guarantee synthesis from Section~\ref{sec:max-g-synt} is relatively straightforward.
Given a set of \LTLf objectives $\Phi = \{\varphi_1, \cdots, \varphi_n\}$, for each $\varphi_i$ we construct a symbolic DFA $\A_s^i = (\X, \Y, \Z^i, I^i, \eta^i, g^i)$. The symbolic product automaton $\A_s = (\X, \Y, \Z, I, \eta, -)$ can be constructed on the fly by taking  $\Z = \bigcup_i \Z^i$, $I = \bigwedge_i I^i$, and defining the transition function as $\eta(\Z, \X, \Y) = \bigwedge_i \eta^i(\Z^i, \X, \Y)$. Since each $\eta^i = \langle \eta^i_{z_0}, \cdots, \eta^i_{z_m} \rangle$ is represented as a vector of Boolean update formulas over the state variables in $\Z^i$, the transition function $\eta$ of the product automaton can be obtained simply by collecting these update formulas. Then, max-guarantee synthesis reduces to solving a sequence of symbolic DFA games $\G_\Psi = (\A_s, g_\Psi)$, where $g_\Psi = \bigwedge_{\psi_i \in \Psi} g^i$ following the guarantee value-based partition as in Step 4 of \Cref{sec:max-g-synt}.

\subsection{Symbolic \MaxValue Synthesis}\label{sec:sym-max-obs}
We follow the same DFA construction and symbolic product automaton construction as in max-guarantee synthesis. For each value $v$, the reachability objective of the corresponding symbolic game is $g_v = \bigvee_{\Psi \subseteq \Phi: \sum_{\varphi \in \Psi} V(\varphi) \geq v} \bigwedge_{\varphi_i \in \Psi} g^i$. We then solve a sequence of reachability games following the descending order of $v$ as in Step 4 of \Cref{subsec:max-obs-basic}.  

\paragraph{Symbolic Incremental \MaxValue Synthesis.} \label{sec:sym-max-obs-inc}

\emph{Extending the basic max-observation synthesis algorithm} to obtain an incremental max-observation optimal strategy relies on recording and combining winning information across values. Note that solving each game $\G_v$ produces two Boolean formulas: $w_v(\Z)$ encoding the winning region $\Win(\G_v)$ and $t_v(\Z, \Y)$, encoding winning state-action pairs. Instead of explicitly associating each state $q$ with its maximal achievable value, we collect the pairs $(w_v, t_v)$ obtained in descending order of $v$ and construct an incremental observationally optimal strategy by prioritising higher values. We start from the winning state-action pairs of the highest value $v_1$ and iteratively add new winning state-action pairs from lower-value games only for states not already winning at a higher value, i.e.\ $t = t_{v_1} \vee \bigvee_{1 < k \leq v^*} (t_{v_k} \wedge \neg w_{v_{k-1}})$. The resulting formula $t$ thus compactly represents incremental max-observation optimal strategies, and we then can utilise Boolean functional synthesis to obtain a single strategy from $t$.

Symbolic \emph{improved incremental max-observation algorithm} performs incremental game solving directly on growing target sets. Each target set $T_k$ is represented by a Boolean formula $g_{T_k} = w_{k-1} \lor g_{v_k}^=$, where $g_{v_k}^=$ encodes the states in $F_{v_k}^=$ and $w_{k-1}$ is the winning region computed at the previous iteration. Since $t_k$ now encode local strategies for all winning states $w_k$, incremental max-observation optimal strategies can be obtained by simply combining the collected winning state-action pairs $t_k$ from all iterations as $t = t_1 \vee \bigvee_{k > 1} (t_k \wedge \neg w_{{k-1}})$. Note that, although the construction of $t$ is syntactically identical to that used in the extension of the basic max-observation algorithm, the underlying reason differs. In the extension, $t_{v_k} \subseteq t_{v_{k+1}}$ holds due to repeated solving on nested targets, whereas this inclusion does not hold in the improved algorithm, since each iteration already includes the previous winning region in its target. Therefore, the term $(t_k \wedge \neg w_{k-1})$ is necessary to discard vacuous $\mathit{tt}$ agent actions introduced at the initial fixpoint step of game $\G_k$. The final strategy $\sigma^*$ is then synthesised from $t$ using Boolean functional synthesis.

\section{Experimental Evaluation}

\noindent\emph{Implementation.} We implemented a prototype tool, called \tool, to evaluate our approaches. \tool uses Spot~\cite{DuretLutzZPGV25} for state-of-the-art \LTLf-to-DFA translation, and relies on the CUDD~3.0.0~\cite{cudd} BDD library for symbolic synthesis, including Boolean formula representation and manipulation.

\noindent\emph{Benchmarks.} We evaluate our approach on benchmarks taken from the \LTLf synthesis track of SyntCOMP (\url{https://www.syntcomp.org/}). We restrict our evaluation to instances that are unrealisable and conjunction-decomposable, i.e.\ specifications given as a conjunction of multiple individual \LTLf formulas, since our goal is to synthesise strategies that satisfy as many individual specifications as possible. Therefore, we have 1145 benchmark instances, in total. The number of conjuncts per instance ranges from~2 to~50. Source code and benchmarks are available in the supplementary material.

\noindent\emph{Experiment setup.} All tests were run on a MacBook with an Apple M4 chip (10 physical cores, up to 4.41 GHz) and 16 GB of RAM. The timeout was set to 300 seconds.

\noindent\emph{Evaluation results} are shown in \Cref{fig:evaluation}. Generally, different methods work broadly equally fast, with a relatively small factor between them and different approaches perform faster on different instances. We have four approaches: \emph{max-guarantee}, \emph{basic max-observation}, \emph{extended max-observation}, and \emph{improved max-observation}~(the latter two are for incremental max-observation synthesis). Within the 300s timeout, these approaches can solve 1060, 1005, 1006, 1008 instances out of 1145 benchmarks, respectively, with 30 instances timing out during individual \LTLf-to-DFA construction, demonstrating the practical feasibility of our optimal \LTLf synthesis techniques.

Specifically, max-observation achieves a strictly larger ensured value than that returned by max-guarantee synthesis on 16 instances.
This confirms that optimising achieved value rather than a maximal realisable core can sometimes lead to realising more potential.
As shown in \Cref{fig:evaluation}(a), max-guarantee is palpably faster than max-observation on a noticeable subset of instances.
We believe that this is because subsets of specifications induce much smaller symbolic arenas due to the on-the-fly symbolic automata product.
Max-observation, on the other hand, has to always work on the full arena (but can generally realise more specifications and needs fewer calls), hence dealing with more complex game solving.
Moreover, the target sets also become complex.
In particular, timeouts occurred in cases where the target required satisfying, for example, 16 out of 20 objectives, which results in 4845 possible combinations. 
The corresponding target is a large disjunction of individual target sets, and the resulting BDDs (together with their successively constructed controlled attractors) can be large, making symbolic operations costly, as shown in \Cref{fig:evaluation}(b). 


\Cref{fig:evaluation}(a) also shows that extending basic max-observation to incremental strategies incurs only minor overhead, and in some cases even improves performance. This is again explained by the symbolic nature of the implementation, where strategy extraction relies on BDD-based Boolean functional synthesis. The iterative construction of $t$ (see \Cref{sec:sym-max-obs-inc}) can produce smaller and more structured BDD representations, which in turn accelerates strategy extraction. \Cref{fig:evaluation}(e) confirms this.

\Cref{fig:evaluation}(c) compares the extension of basic max-observation with the improved max-observation algorithm. We can see that, despite solving slightly fewer instances, the extension can sometimes outperform the improved algorithm in runtime. This is, again, because the improved algorithm works on larger target sets in each game iteration, which can reduce the number of fixpoint iterations~(see \Cref{fig:evaluation}(f)) but makes each fixpoint step more expensive in practice~(see \Cref{fig:evaluation}(d)), despite its stronger theoretical guarantees.


\section{Conclusions}
We have studied optimal synthesis for \LTLf specifications in cases where the conjunction of objectives is unrealisable. We introduced max-guarantee, max-observation, and incremental max-observation synthesis to capture different optimisation goals and to allow strategies to exploit the observed behaviour of the environment instead of always assuming a purely adversarial one. Our experimental evaluation on a prototype implementation shows that optimal \LTLf synthesis has practical potential, with good scalability and only minor overhead. This work is a first step towards addressing unrealisability in \LTLf synthesis. For future work, we plan to investigate on-the-fly techniques to further improve efficiency and extensions to synthesis under partial observability.

\section*{Acknowledgements}
This work was supported by the EPSRC through grants EP/X03688X/1 (TRUSTED) and
EP/X042596/1 (Games for Good).

\newpage
\bibliographystyle{named}
\bibliography{ref}


\end{document}